\setlist{nosep}
\newcommand{\R}{\mathbb{R}}
\newtheorem{theorem}{Theorem}[section]
\newtheorem{corollary}[theorem]{Corollary}
\newtheorem{lemma}[theorem]{Lemma}
\newtheorem{proposition}[theorem]{Proposition}
      \theoremstyle{plain}
\newtheorem{remark}[theorem]{Remark} 
\DeclareSymbolFont{bbold}{U}{bbold}{m}{n}
\DeclareSymbolFontAlphabet{\mathbbold}{bbold}
\newcommand{\vect}[1]{\mathbbold{#1}}
\newcommand{\trace}{\operatorname{trace}}
\newcommand{\setdef}[2]{\left\{#1 \; | \; #2\right\}}
\newcommand\norm[1]{\left\lVert#1\right\rVert}
\newcommand{\subscr}[2]{#1_{\textup{#2}}}
\newcommand{\Sd}{\subscr{\mathbb{S}}{d}}
\newcommand{\Sdpp}{\subscr{\mathbb{S}}{d}^{\operatorname{++}}}
\newcommand{\vecc}{\textbf{\textup{vec}}}
\def\inv{^{-1}}
\begin{document}

\title{Distributionally Robust Formulation and Model Selection for the Graphical Lasso}

\author{Pedro Cisneros-Velarde$^1$, Sang-Yun Oh$^2$, Alexander Petersen$^2$\\
Center for Control, Dynamical Systems and Computation$^1$, \\
Department of Statistics and Applied Probability$^2$,\\
University of California, Santa Barbara}
 \date{}
\maketitle

\begin{abstract}
Building on a recent framework for distributionally robust optimization, we 
consider estimation of the inverse covariance matrix for multivariate data. We provide a novel notion of a Wasserstein ambiguity set specifically tailored to this estimation problem, 
leading to a tractable class of regularized estimators. Special cases include penalized likelihood estimators for Gaussian data, specifically the graphical lasso estimator. As a consequence of this formulation, the radius of the Wasserstein ambiguity set is directly related to the regularization parameter in the estimation problem.  Using this relationship, 
the level of robustness of the estimation procedure can be shown to correspond to the level of confidence with which the ambiguity set contains a distribution with the population covariance. Furthermore, a unique feature of our formulation is that the radius can be expressed in closed-form as a function of the ordinary sample covariance matrix. Taking advantage of this finding, we develop a simple algorithm to determine a regularization parameter for graphical lasso, using only the bootstrapped sample covariance matrices, meaning that computationally expensive repeated evaluation of the graphical lasso algorithm is not necessary. Alternatively, the distributionally robust formulation can also quantify the robustness of the corresponding estimator if one uses an off-the-shelf method such as cross-validation. Finally, we numerically study the obtained regularization criterion and analyze the robustness of other automated tuning procedures used in practice.
\end{abstract}

\section{Introduction}

In statistics and machine learning, the covariance matrix $\Sigma$ of a random vector $X \in \R^d$ is a fundamental quantity for characterizing marginal pairwise dependencies between variables. Furthermore, the inverse covariance matrix $\Omega = \Sigma^{-1}$ provides information about the \emph{conditional} linear dependency structure between the variables.  For example, in the case that $X$ is Gaussian, $\Omega_{jk} = 0$ if and only if the $j$th and $k$th variables of $X$ are conditionally independent given the rest. Such relationships are of interest in many applications such as environmental science, biology, and neuroscience \citep{guillot2015,HUANG2010935,Krumsiek2011}, and have given rise to various statistical and machine learning methods for inverse covariance estimation. 


Given an independent sample $X_i \sim X,$ $i = 1,\ldots,n,$ the sample covariance $A_n = \frac{1}{n}\sum^n_{i=1}X_iX_i^\top$, which is the maximum likelihood estimator if $X$ is Gaussian, can be a poor estimator of $\Sigma$ unless $d/n$ is very small. Driven by a so-called high dimensional setting where $n\ll d$, where $A_n$ is not invertible, regularized estimation of the precision matrix has gained significant interest~\citep{Cai2011,DE:00,JF-TH-RT:07,Khare2015,Won2012,MY-LY:07}. Such regularization procedures are useful even when $A_n$ is a stable estimate (i.e., positive definite with small condition number), since the inverse covariance estimate $A_n^{-1}$ is dense and will not reflect the sparsity of corresponding nonzero elements in $\Omega$. 

\textbf{Distributionally Robust Optimization}:
Let $\Sd$ be the set of $d\times{d}$ symmetric matrices, and 
$\Sdpp \subset \Sd$ be the subset 
of positive definite symmetric matrices. 
Given a loss function $l(X;K)$ for $K \in \Sdpp$ and $X \in \mathbb{R}^d$, a classical approach would be to estimate $\Omega$ by minimizing the empirical loss $\sum_{i = 1}^n l(X_i;K)$ over $K \in \Sdpp,$ perhaps including a regularization or penalty term.  The error in this estimate arises from the discrepancy between the true data-generating distribution and the observed training samples, and can be assessed by various tools such as concentration bounds or rates of convergence.  In contrast, \emph{distributionally robust optimization} (DRO) is a technique that explicitly incorporates uncertainty about the distribution of the $X_i$ into the estimation procedure.
For an introduction on the general topic of DRO, we refer to the works~\citep{SS-PME-DK:15,JB-KM:16,SSA19} and the references therein. In the context of inverse covariance estimation, a distributionally robust estimate of $\Omega$ is obtained by solving
\begin{equation}
\label{for_now}
\inf_{K\in\Sdpp}\sup_{P\in\mathcal{S}}E_P[l(K;X)],
\end{equation}
where $\mathcal{S}$, known as an \emph{ambiguity set}, is a collection of probability measures on $\mathbb{R}^d$. As pointed out in~\citep{JB-KM:16}, a natural choice for $\mathcal{S}$ is the neighborhood $\setdef{P}{D(P,\mu)\leq \delta}$, with $\mu$ being a chosen baseline model, $\delta$ being some tolerance level which defines the uncertainty size of the ambiguity set, and $D$ being some discrepancy metric between two probability measures. In a practical setting, we have access to some samples (or data points) from the unknown distribution, and thus, a good candidate for the baseline model is the empirical measure.  

Very recent work by~\cite{VAN-DK-PME:18}, also analyzed by~\cite{JB-NS:19}, used the DRO framework to construct a new regularized (dense) inverse covariance estimator.  Working under the assumption that $X$ is Gaussian, the authors construct an ambiguity set $\mathcal{S}$ of Gaussian distributions that, up to a certain tolerance level, are consistent with the observed data. Recent work on DRO in other machine learning problems has revealed explicit connections to well-known regularized estimators, specifically regularized logistic regression~\citep{SS-PME-DK:15} and the square-root lasso for linear models~\citep{JB-YK-KM:16}; however, such a connection to regularized \emph{sparse} inverse covariance estimators that are used in practice has yet to be made. 

\textbf{The Graphical Lasso}:   One of the most common methods to recover the sparsity pattern in $\Omega$ is to add an $l_1$-regularization term to the Gaussian likelihood function, motivated by the consideration of the \emph{Gaussian Graphical Model} (GGM). A sparse estimate of $\Omega = \Sigma\inv$ is produced by minimizing
\begin{equation}
\label{zero2}
\mathcal{L}_{\lambda}(K) = \frac{1}{n}\sum^{n}_{i=1}X_i^\top K X_i-\log|K|+\lambda\sum^d_{i=1}\sum_{j=1}^d|k_{ij}|,
\end{equation}
where $k_{ij}$ is the $(i,j)$ entry of $K$ and $\lambda > 0$ is a user-specified regularization parameter~\citep{OB-LEG-AA:08,JF-TH-RT:07,MY-LY:07}.  Although several algorithms exist to solve this objective function~\citep{JF-TH-RT:07,Rolfs2012,Hsieh14a}, the minimizer of \eqref{zero2} is often referred to as \emph{graphical lasso} estimator~\citep{JF-TH-RT:07}.  The first two terms of \eqref{zero2} are related to Stein's loss~\citep{WJ-CS:61} when evaluated at the empirical measure, and also correspond to the negative log-likelihood up to an additive constant if $X$ is Gaussian.  The performance of the graphical lasso estimator in high-dimensional settings has been investigated \citep{AJR-PJB-EL-JZ:08,JJ-SvdG:18}, 
as well as modifications and extensions that implement some notion of \emph{robustness}, i.e., for making it robust to outliers or relaxing the normality assumptions in the data \citep{Khare2015,CL-JF:09,PLH-XLT:18,LX-HZ:12,EY-ACL:15}.

Besides its theoretical relevance, the graphical lasso and its extensions also enjoy many practical advantages. For example, it has been used as a network inference tool. In these applications, the precision matrix can indicate 
which nodes in a network are conditionally independent given information from remaining nodes, thus giving an indication of network functionality. This has been important in neuroscience applications when studying the inference of brain connectivity~\citep{SY-QS-SJ-PW-ID-JY:15,SS-KM-GS-MW-CB-TN-JR-MW:15,HUANG2010935}. Applications in gene regulatory networks and metabolomics have also been reported~\citep{PM-AY-OOO:10,NS-SK-HK-OOO:18,Krumsiek2011}.

The performance of the graphical lasso estimator hinges critically on the choice of $\lambda.$ While there have been studies on how to properly tune $\lambda$ to obtain a consistent estimator or to establish correct detection of nonzero elements in the precision matrix~\citep{AJR-PJB-EL-JZ:08,OB-LEG-AA:08,RM-TH:12},  
%
in practice, this selection 
is often made through automated methods like cross-validation. 
%
%



\textbf{Contributions}:  In this paper, we propose a distributionally robust reformulation of the graphical lasso estimator in \eqref{zero2}.  Following~\cite{SS-PME-DK:15,JB-YK-KM:16,PME-DK:18}, we utilize the Wasserstein metric to quantify distributional uncertainty for the construction of the ambiguity set. The following points summarize our main contributions.  

\begin{itemize}
\item We formulate a class of DRO problems for inverse covariance estimation, leading to a tractable class of $\ell_p$-norm regularized estimators.  As the graphical lasso estimator~\eqref{zero2} is a special case, this provides us with a new interpretation of this popular technique.  This DRO formulation is made possible by a novel type of ambiguity set, now defined as a collection of measures on matrices.  This nontrivial adaptation is necessary due to the fact that a direct generalization of other DRO approaches using vector-valued data (e.g., \cite{SSA19}) does not result in a closed-form regularization problem, and thus does not provide the desired connection to the graphical lasso. 
\item We use this formulation to suggest a criterion for the selection of the regularization parameter in the estimation problem in the classical regime $n>d$. This criterion follows the \emph{Robust Wasserstein Profile} (RWP) inference recently introduced by~\cite{JB-YK-KM:16}, which makes no assumption on the normality of the data, and which we tailor to our specific problem.  The proposed criterion expresses the regularization parameter as an explicit function of the sample covariance $A_n,$ unlike other instances where RWP has been implemented which rely on stochastic dominance arguments.

\item We formulate a novel \emph{robust selection} (RobSel) algorithm for regularization parameter choice.  Focusing on the graphical lasso, we provide numerical results that compare the performance of cross-validation and our proposed algorithm for the selection of the regularization term. 
\end{itemize}


%
%

The paper is organized as follows. In Section 2 we describe our main theoretical result: the distributionally robust formulation of the regularized inverse covariance (log-likelihood) estimation, from which graphical lasso is a particular instance. In Section 3 we propose a criterion for choosing the regularization parameter inspired by this formulation and outline the bootstrap-based RobSel algorithm for its computation. In Section 4 we present some numerical results comparing the proposed criterion of Section 3 with cross-validation. Finally, we state some concluding remarks and future research directions in Section 5. All proofs of theoretical results can be found in the supplementary material.

\section{A Distributionally Robust Formulation of the Graphical lasso}
\label{s_3}

First, we provide preliminary details on notation. Given a matrix $A\in\R^{d\times{d}}$, $a_{jk}$ denotes its $(j,k)$ entry and $\vecc(A)\in\R^{d^2}$ denotes its vectorized form, which we assume to be in a row major fashion. For matrices denoted by Greek letters, its entries are simply denoted by appropriate subscripts, i.e.\ $\Sigma_{jk}$.  The operator $|\cdot|$, when applied to a matrix, denotes its determinant; when applied to a scalar or a vector, it denotes the absolute value or entry-wise absolute value, respectively. 
The $\ell_p$-norm of a vector is denoted by $\norm{\cdot}_p$.  We use the symbol $\Rightarrow$ to denote convergence in distribution.

Recall that $X\in\R^{d}$ is a zero-mean random vector with covariance matrix $\Sigma\in\Sdpp$. Let $\mathbb{Q}_0$ be the probability law for $X$ and $\Omega=\Sigma^{-1}$ be the precision matrix. Define the \textit{graphical loss function} as 
\begin{align}
\label{onee1}
\begin{split}
l(X;K)&=X^\top K X-\log|K|\\
&=\trace(KXX^\top)-\log|K|.
\end{split}
\end{align}
Then $E_{\mathbb{Q}_0}[l(K;X)]=\trace(K\Sigma)-\log|K|$ is a convex function of $K$ over the convex cone $\Sdpp$.  Using the first-order optimality criterion, we observe that $K=\Omega$ sets the gradient $\frac{\partial}{\partial K}E_{\mathbb{Q}_0}[l(X,K)]=\Sigma - K^{-1}$ equal to the zero matrix (see~\cite[Appendix A]{SB-LV:04} for details on this differentiation). Hence,
\begin{equation*}
\arg\min_{K\in\Sdpp}E_{\mathbb{Q}_0}[l(X;K)]=\Omega.
\end{equation*}  
so that \eqref{onee1} is a consistent loss function.  

Now, if we consider an iid random sample $X_1,\cdots,X_n\sim X$, $n > d,$ with empirical measure $\mathbb{Q}_n,$ then
\begin{align*}
\arg\min_{K\in\Sdpp}E_{\mathbb{Q}_n}[l(X;K)] &= \arg\min_{K\in\Sdpp} \frac{1}{n}\sum_{i=1}^{n}l(X_i;K)\\
&=A_n^{-1}
\end{align*}  
with $A_n=\frac{1}{n}\sum^{n}_{i=1}X_iX_i^\top$. Thus, as described in the Introduction, a natural approach would be to implement the DRO procedure outlined in \cite{PME-DK:18} by building an ambiguity set based on perturbations of $\mathbb{Q}_n,$ leading to the DRO estimate given by \eqref{for_now}.  However, this approach does not convert \eqref{for_now} into a regularized estimation problem as desired, since the inner supremum cannot be explicitly given in closed-form. For more details, see section A in the supplementary material.

As an alternative, let $\mathbb{P}_0$ represent the measure of the random matrix $W=XX^\top$ on $\Sd$ induced by $\mathbb{Q}_0$ and, similarly let $\mathbb{P}_n$ be empirical measure of the sample $W_i = X_iX_i^\top$, $i = 1,\ldots,n.$  Redefining the graphical loss function $l: \Sd \times \Sdpp$ as 
\begin{align}
\label{onee11}
\begin{split}
l(W;K)&=\trace(KW)-\log|K|,
\end{split}
\end{align}
then 
\begin{align*}
\Omega &= \arg\min_{K \in \Sdpp}E_{\mathbb{P}_0}[l(W;K)]\\
A_n\inv &= \arg\min_{K \in \Sdpp}E_{\mathbb{P}_n}[l(W;K)].    
\end{align*}  
This observation leads to a tractable DRO formulation by constructing ambiguity sets built around the empirical measure $\mathbb{P}_n.$ 
The DRO formulation for inverse covariance estimation becomes
\begin{equation}
\label{newDRO}
\min_{K\in\Sdpp}\;\sup_{P:\ \mathcal{D}_{c}(P,\mathbb{P}_{n})\leq \delta}E_{P}[l(W;K)].
\end{equation}
The ambiguity set in this formulation is specified by the collection of measures $\setdef{P}{\mathcal{D}_{c}(P,\mathbb{P}_n)\leq \delta}$, which we now describe.
%
Given two probability distributions $P_1$ and $P_2$ on $\Sd$ and some transportation cost function $c:\Sd\times\Sd\to[0,\infty)$ (which we will specify below), we define the \textit{optimal transport cost} between $P_1$ and $P_2$ as 
\begin{align}
\label{Discrepancy_Def}
\begin{split}
\mathcal{D}_{c}(P_1,P_2) =\inf \{E_{\pi }\left[ c\left(U,V\right) \right]|
\pi \in \mathcal{P}\left(\Sd\times\Sd\right) ,\ \pi_{_U}=P_1, \ \pi _{_V}=P_2\}
\end{split}
\end{align}
where $\mathcal{P}\left(\Sd\times\Sd\right) $ is the
set of joint probability distributions $\pi$ of $(U,V)$ supported on $\Sd\times\Sd$, and $\pi_{_U}$ and $\pi_{_V}$ denote
the marginals of $U$ and $V$ under $\pi $, respectively. In this paper, we are interested in cost functions
\begin{equation}
\label{c_used}
c(U,V)=\norm{\vecc(U)-\vecc(V)}_{q}^\rho,
\end{equation}
with $U,V \in \Sd$, $\rho \geq 1$, $q \in [1,\infty]$. As pointed out by \cite{JB-YK-KM:16}, the resulting optimal transport cost $\mathcal{D}_c^{1/\rho}$ is the Wasserstein distance of order $\rho.$  Our first theoretical result demonstrates that the optimization in \eqref{newDRO} corresponds to a class of regularized estimators under the graphical loss function \eqref{onee11}.

\begin{theorem}[DRO formulation of regularized inverse covariance estimation]
\label{l1}
Consider the cost function  in~\eqref{c_used} for a fixed $\rho\geq 1$. Then,
\begin{align}
\label{eq1}
\begin{split}
\min_{K\in\Sdpp}\;\sup_{P:\ \mathcal{D}_{c}(P,\mathbb{P}_{n})\leq \delta}E_{P} %
\big[l(W;K)\big]=\min_{K\in\Sdpp}\left\{\trace(KA_n)-\log|K|+\delta^{1/\rho}\norm{\vecc(K)}_p\right\},
\end{split}
\end{align}
where  
$\frac{1}{p}+\frac{1}{q}=1$. 
\end{theorem}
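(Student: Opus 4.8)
The plan is to fix $K\in\Sdpp$, evaluate the inner supremum in~\eqref{eq1} in closed form as a function of $K$, and then take the minimum over $K$. The starting point is the strong-duality theorem for Wasserstein-penalized worst-case expectations (the Blanchet--Murthy type result, in the form also exploited in~\cite{JB-YK-KM:16,PME-DK:18}): for the transportation cost $c$ of~\eqref{c_used} and any upper semicontinuous $f$ whose growth is dominated by $c$,
\begin{equation*}
\sup_{P:\ \mathcal{D}_c(P,\mathbb{P}_n)\le\delta} E_P[f(W)] = \inf_{\gamma\ge0}\left\{\gamma\delta + \frac1n\sum_{i=1}^n\ \sup_{U\in\Sd}\big(f(U)-\gamma\,c(U,W_i)\big)\right\}.
\end{equation*}
I would apply this with $f(U)=l(U;K)=\trace(KU)-\log|K|$. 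The hypotheses are easy to check: $f$ is affine in $U$, hence continuous, and from $|\trace(KU)|\le\norm{\vecc(K)}_p\norm{\vecc(U)}_q$ the inner supremum over $U$ is finite and attained whenever $\gamma>\norm{\vecc(K)}_p$ (if $\rho=1$) or whenever $\gamma>0$ (if $\rho>1$), which is the only regime relevant to the outer infimum; moreover $\mathbb{P}_n$ itself lies in the ambiguity set, so the worst-case expectation is finite.

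The main structural simplification is that $l$ is affine in its first argument. Writing $U=W_i+\Delta$ with $\Delta\in\Sd$, and using $\trace(K(W_i+\Delta))=\trace(KW_i)+\langle\vecc(K),\vecc(\Delta)\rangle$ (legitimate because $K,W_i,\Delta$ are symmetric) together with $c(W_i+\Delta,W_i)=\norm{\vecc(\Delta)}_q^\rho$, each per-sample supremum splits as
\begin{equation*}
\sup_{U\in\Sd}\big(f(U)-\gamma c(U,W_i)\big)=\trace(KW_i)-\log|K|+h(\gamma,K),\qquad h(\gamma,K):=\sup_{\Delta\in\Sd}\Big(\langle\vecc(K),\vecc(\Delta)\rangle-\gamma\norm{\vecc(\Delta)}_q^\rho\Big),
\end{equation*}
where $h(\gamma,K)$ is independent of $i$. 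Averaging over $i$ replaces $\frac1n\sum_i\trace(KW_i)$ by $\trace(KA_n)$, so the dual value equals $\trace(KA_n)-\log|K|+\inf_{\gamma\ge0}\{\gamma\delta+h(\gamma,K)\}$, and it remains to prove $\inf_{\gamma\ge0}\{\gamma\delta+h(\gamma,K)\}=\delta^{1/\rho}\norm{\vecc(K)}_p$.

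For the final step I would first collapse $h$ to a scalar optimization. By the equality case in H\"older's inequality, $\sup_{\norm{\vecc(\Delta)}_q\le t}\langle\vecc(K),\vecc(\Delta)\rangle=t\,\norm{\vecc(K)}_p$ with $1/p+1/q=1$; the extremal $\Delta$ has entries proportional to $|k_{ij}|^{q-1}\sign(k_{ij})$ (or to $\sign(k_{ij})$ when $q=\infty$), which is automatically symmetric because $K$ is, so the constraint $\Delta\in\Sd$ is not binding. Hence $h(\gamma,K)=\sup_{t\ge0}\big(t\norm{\vecc(K)}_p-\gamma t^\rho\big)$, and $\inf_{\gamma\ge0}\{\gamma\delta+h(\gamma,K)\}$ is an elementary two-case calculation: for $\rho=1$ the bracketed quantity equals $\gamma\delta$ when $\gamma\ge\norm{\vecc(K)}_p$ and $+\infty$ otherwise, with infimum $\delta\norm{\vecc(K)}_p$; for $\rho>1$, optimizing over $t$ and then over $\gamma$ yields $\delta^{1/\rho}\norm{\vecc(K)}_p$ once the powers of $\rho$ and $\rho-1$ cancel. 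Minimizing $\trace(KA_n)-\log|K|+\delta^{1/\rho}\norm{\vecc(K)}_p$ over $K\in\Sdpp$ then gives~\eqref{eq1}.

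I expect the difficulty to be almost entirely bookkeeping rather than conceptual: carefully verifying the growth/integrability conditions that license the optimal-transport strong duality in this matrix-valued setting — in particular that nothing breaks because $\mathbb{P}_n$ is supported on rank-one positive semidefinite matrices while the worst-case $P$ need not be positive semidefinite (which is fine, since $l$ in~\eqref{onee11} is defined for all of $\Sd$ in its first argument) — and keeping the constants straight in the $\rho>1$ optimization. The one genuinely non-routine point is the observation that symmetry of $K$ forces the H\"older extremizer to be symmetric, so that maximizing the perturbation over $\Sd$ (as the construction via measures on matrices requires) still produces exactly the penalty $\norm{\vecc(K)}_p$; this is what makes the reformulation close in the desired form.
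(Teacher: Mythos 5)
Your proposal is correct and follows essentially the same route as the paper's proof: the Blanchet--Kang--Murthy strong duality (which the paper adapts to measures on $\Sd$ in its Proposition~\ref{DRO-dual}), the change of variables $\Delta = U - W_i$ exploiting affineness of $l$ in its first argument, the collapse to a scalar problem via the equality case of H\"older's inequality with the observation that the extremizer can be taken in $\Sd$, and the two-case analysis $\rho=1$ versus $\rho>1$ followed by the optimization over $\gamma$. The paper packages the H\"older step through the set $\mathcal{M}(K)$ of tight perturbations rather than your explicit symmetric extremizer, but the content is identical.
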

%

Theorem~\ref{l1} is a remarkable theoretical result that provides a mapping between the regularization parameter and the uncertainty size $\delta$ of the ambiguity set in the DRO formulation. Then, the regularization problem reduces to determining a good criterion for choosing $\delta$, which we explore in Section~\ref{section3}. Moreover, we obtain the \emph{graphical lasso} formulation~\eqref{zero2} by setting $q=\infty$ in~\eqref{c_used}. 
From~\eqref{eq1}, a smaller ambiguity set implies less robustness being introduced in the estimation problem by reducing the importance of the regularization term. Conversely, a larger regularization term increases the number of nuisance distributions inside the ambiguity set, and thus the robustness.

\begin{remark}
The ambiguity set used in~\eqref{eq1} makes no assumptions on the normality of the distribution of the samples $\{X_1,\dots,X_n\}$. 
Then,~\eqref{eq1} tells us that adding a penalization to the precision matrix gives a robustness in terms of the distributions that the samples may have, which do not necessarily have to be Gaussian for this formulation to hold.  Furthermore, it holds independent of the relationship between $n$ and $d.$
\end{remark}

\section{Selection of the regularization parameter}
\label{section3}

This section follows closely the line of thought recently introduced by~\cite{JB-YK-KM:16} in the analysis of regularized estimators under the DRO formulation. Specifically, we will demonstrate that the ambiguity set $\{P:\ \mathcal{D}_{c}(P,\mathbb{P}_{n})\leq \delta\}$ represents a confidence region for $\Omega = \Sigma\inv,$ and use the techniques of~\cite{JB-YK-KM:16} to explicitly connect the amibiguity size $\delta$ with a confidence level.  As previously stated, $l(W;K)$ is a differentiable 
function on $K\in\Sdpp$ with $\frac{\partial}{\partial K}l(W;K)= W-K^{-1}$, so that 
\begin{equation}
\label{eqDl_zero}
E_{\mathbb{P}_0}\left[\frac{\partial}{\partial K}l(W;K)\Bigr|_{K=\Omega}\right]=\vect{0}_{d\times{d}}.
\end{equation}
Hence, even though the loss function $l(W;K)$ has been inspired from the log-likehood estimation of the covariance matrix $\Sigma$ for samples of Gaussian random vectors, equation~\eqref{eqDl_zero} is transparent to any underlying distribution of the data. For any $K \in \Sdpp,$ define the set
%
\begin{align}
\label{opt-set}
\begin{split}
\mathcal{O}(K):=\Big\{P\in \mathcal{P}(\Sd)\Big|
E_{P}\left[ \frac{\partial}{\partial K'}l(W;K')\Bigr|_{K'=K}\right] = \vect{0}_{d\times{d}}\Big\},
%
\end{split}
\end{align}
corresponding to all probability measures with covariance $K\inv$, i.e.\ for which $K$ is an optimal loss minimization parameter; here, $\mathcal{P}(\Sd)$ denotes the set of all probability distributions supported on $\Sd$. Thus, $\mathcal{O}(\Omega)$ contains all probability measures with covariance matrix agreeing with that of $X.$
%
%

Implicitly, the Wasserstein ambiguity set $\{P:\ \mathcal{D}_{c}(P,\mathbb{P}_{n})\leq \delta\}$ is linked to the collection of covariance matrices
\begin{align}
\label{conf_reg}
\begin{split}
\mathcal{C}_n(\delta):=&\{K\in\Sdpp|\text{ there exists }P\in \mathcal{O}(K) \cap\setdef{P}{\mathcal{D}_{c}(P,\mathbb{P}_{n})\leq \delta}\}\\
=&\bigcup\limits_{P:\ \mathcal{D}_{c}(P,\mathbb{P}_{n})\leq \delta}\arg\min_{K\in\Sdpp}E_P[l(W;K)].
\end{split}
\end{align}
We refer to 
$\mathcal{C}_n(\delta)$ 
as the set of \emph{plausible} selections for $\Omega$.
\begin{lemma}[Interchangeability in the DRO formulation]
\label{l111}
Consider the setting of Theorem~\ref{l1}. 
Then, for $n > d,$ the following holds with probability one:
\begin{align}
\begin{split}
\inf_{K \in \Sdpp}\;\sup_{P:\ \mathcal{D}_{c}(P,\mathbb{P}_{n})\leq \delta}E_{P}%
\left[ l\big( W;K\big)\right]
=\sup_{P:\ \mathcal{D}_{c}(P,\mathbb{P}_{n})\leq \delta}\;\inf_{K\in\Sdpp}E_{P}\left[ l\big(W;K\big)\right]. 
\end{split}
\label{M_M}
\end{align}
\end{lemma}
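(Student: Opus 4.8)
The inequality ``$\geq$'' in \eqref{M_M} is automatic (weak duality), so it suffices to prove ``$\leq$''; the plan is in fact to evaluate both sides in closed form and check that they agree. Throughout, $n>d$ is used only to guarantee that $A_n\in\Sdpp$ with probability one, which is precisely the almost-sure event on which the statement is asserted. For the right-hand side, since $l(W;K)=\trace(KW)-\log|K|$ is affine in $W$ we have $E_P[l(W;K)]=\trace(K\,\bar W_P)-\log|K|$ for every $P$, where $\bar W_P:=E_P[W]\in\Sd$. Because $\partial l(W;K)/\partial K=W-K^{-1}$, the same first-order argument that identifies $A_n^{-1}$ as the minimizer of $E_{\mathbb{P}_n}[l(W;\cdot)]$ gives $\inf_{K\in\Sdpp}\{\trace(K\bar W_P)-\log|K|\}=d+\log|\bar W_P|$ when $\bar W_P\in\Sdpp$ and $-\infty$ otherwise. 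Hence the right-hand side of \eqref{M_M} equals $d+\sup\{\log|M|: M\in\Sdpp\cap\mathcal{M}(\delta)\}$, where $\mathcal{M}(\delta):=\{\bar W_P:\mathcal{D}_c(P,\mathbb{P}_n)\le\delta\}$ is the set of means attainable inside the ambiguity set.

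The next step is to show $\mathcal{M}(\delta)=\{M\in\Sd:\norm{\vecc(M)-\vecc(A_n)}_q\le\delta^{1/\rho}\}$. For ``$\subseteq$'', pick a coupling $\pi$ of $(U,V)\sim(\mathbb{P}_n,P)$ nearly attaining $\mathcal{D}_c(P,\mathbb{P}_n)$; Jensen's inequality together with $\rho\ge1$ yields $\norm{\vecc(\bar W_P)-\vecc(A_n)}_q=\norm{E_\pi[\vecc(V)-\vecc(U)]}_q\le\bigl(E_\pi\norm{\vecc(V)-\vecc(U)}_q^\rho\bigr)^{1/\rho}\le\delta^{1/\rho}$. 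For ``$\supseteq$'', given such an $M$ the translated empirical measure $P_M:=\tfrac1n\sum_{i=1}^n\delta_{W_i+(M-A_n)}$ is supported on $\Sd$, has mean $M$, and, via the coupling matching $W_i$ with $W_i+(M-A_n)$, satisfies $\mathcal{D}_c(P_M,\mathbb{P}_n)\le\norm{\vecc(M-A_n)}_q^\rho\le\delta$.

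It remains to show that, with $r:=\delta^{1/\rho}$, the quantity $d+v^\star$, $v^\star:=\sup\{\log|M|:M\in\Sdpp,\ \norm{\vecc(M)-\vecc(A_n)}_q\le r\}$, coincides with the left-hand side of \eqref{M_M}. This is a concave maximization for which $M=A_n\in\Sdpp$ is a strict Slater point (as $r>0$), so introducing a multiplier $\lambda\ge0$ and invoking strong convex duality, $v^\star=\inf_{\lambda\ge0}\sup_{M\in\Sdpp}\{\log|M|+\lambda r-\lambda\norm{\vecc(M)-\vecc(A_n)}_q\}$. Writing $\norm{\vecc(M)-\vecc(A_n)}_q=\max_{\norm{\vecc(Y)}_p\le1}\trace\bigl(Y(M-A_n)\bigr)$ (legitimate because $\vecc$ carries the trace inner product to the Euclidean one, making $\norm{\cdot}_p$ and $\norm{\cdot}_q$ dual), the set $\{Y:\norm{\vecc(Y)}_p\le\lambda\}$ is compact and Sion's theorem exchanges $\sup_M$ with $\inf_Y$; the resulting inner problem $\sup_{M\in\Sdpp}\{\log|M|-\trace(YM)\}$ equals $-\log|Y|-d$ when $Y\in\Sdpp$ and $+\infty$ otherwise. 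Assembling the pieces and minimizing over $\lambda$ (which forces $\lambda=\norm{\vecc(Y)}_p$ at the optimum) gives
\[
d+v^\star=\inf_{Y\in\Sdpp}\bigl\{\trace(YA_n)-\log|Y|+r\,\norm{\vecc(Y)}_p\bigr\},
\]
and by Theorem~\ref{l1} this last expression is exactly $\inf_{K\in\Sdpp}\sup_{P:\mathcal{D}_c(P,\mathbb{P}_n)\le\delta}E_P[l(W;K)]$, the left-hand side of \eqref{M_M}, which completes the proof.

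The main obstacle is the chain of $\inf$--$\sup$ interchanges in the third step: the outer one is ordinary Lagrangian duality and hinges on the Slater condition, i.e.\ on $A_n\in\Sdpp$, which is where $n>d$ and the ``probability one'' qualifier are used; the inner one is a genuine minimax exchange that becomes legitimate only after the multiplier has compactified the $Y$-domain, at which point $\log|M|-\trace(YM)$ is concave--affine with one compact factor and Sion applies. One should also record that $v^\star$ is finite and attained---immediate since $\{M\in\Sd:\norm{\vecc(M)-\vecc(A_n)}_q\le r\}$ is compact and $\log|\cdot|$ is upper semicontinuous, equal to $-\infty$ on $\partial\Sdpp$---so that measures $P$ with $\bar W_P\notin\Sdpp$, for which the inner infimum is $-\infty$, genuinely drop out of the outer supremum in \eqref{M_M}.
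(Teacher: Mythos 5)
Your proof is correct, but it follows a genuinely different route from the paper's. The paper proves the lemma abstractly: it verifies the hypotheses of an interchangeability result cited from Blanchet, Kang and Murthy (their Proposition 8) by using Theorem~\ref{l1} to show that the worst-case objective $g(K)=\trace(KA_n)-\log|K|+\delta^{1/\rho}\norm{\vecc(K)}_p$ is finite, convex, and coercive on $\Sdpp$ --- coercivity being where $n>d$ (hence $A_n\in\Sdpp$ almost surely) enters, so that sublevel sets are compact --- together with convexity and lower semicontinuity of $K\mapsto E_P[l(W;K)]$. You instead evaluate both sides in closed form: you characterize the set of means attainable within the Wasserstein ball as exactly the $q$-norm ball of radius $\delta^{1/\rho}$ around $A_n$ (translation coupling for one inclusion, Jensen for the other), reduce the right-hand side to $d+\sup\{\log|M|\}$ over that ball intersected with $\Sdpp$, and match this to the left-hand side of Theorem~\ref{l1} via Lagrangian duality (with Slater point $M=A_n$, which is again where $n>d$ and the almost-sure qualifier are used) and Sion's theorem. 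Both arguments are sound and both lean on Theorem~\ref{l1}, so neither is circular. The paper's approach is shorter given the cited proposition and transfers to other losses with the same structural properties; yours is self-contained modulo standard convex duality, identifies the common optimal value explicitly, and as a by-product gives a concrete description of the attainable means --- hence of the set $\mathcal{C}_n(\delta)$ of plausible precision matrices as $\setdef{M^{-1}}{M\in\Sdpp,\ \norm{\vecc(M-A_n)}_q\leq\delta^{1/\rho}}$ --- which the paper leaves implicit.
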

%
Lemma~\ref{l111} states that any estimator obtained by minimizing the left-hand side of \eqref{M_M} must be in $\mathcal{C}_n(\delta)$, otherwise the right-hand side of \eqref{M_M} would be strictly greater than the left. Thus, in line with the goal of providing a robust estimator, the idea is to choose $\delta$ so that $\mathcal{C}_n(\delta)$ also contains the true inverse covariance matrix $\Omega$ with high confidence.

As $\mathbb{P}_0$ is the weak limit of $\mathbb{P}_n$, we will eventually have that $\Omega \in \mathcal{C}_n(\delta)$ with high probability for any $\delta,$ so that $\mathcal{C}_n(\delta)$ is a confidence region for $\Omega.$ 
From this observation, we can choose the uncertainty size $\delta$ optimally by the criterion
\begin{equation}
\delta=\inf\setdef{\delta>0}{P(\Omega\in\mathcal{C}_n(\delta))\geq 1-\alpha},  \label{OPT_delta1} 
\end{equation}%
i.e., for a specified confidence level $1-\alpha$, we choose $\delta$ so that $\mathcal{C}_n(\delta)$ is a $(1-\alpha)$-confidence region for $\Omega$.
%
%
%

To continue our anlaysis, we make use of the so-called \emph{Robust Wasserstein Profile (RWP) function} $R_n$ introduced by~\cite{JB-YK-KM:16}, 
\begin{equation}
\label{rwp-f}
\begin{split}
R_{n}(K)&=\inf \setdef{\mathcal{D}_{c}(P,\mathbb{P}_n)}{P\in
\mathcal{O}(K)}\\
&=\inf \Big\{\mathcal{D}_{c}(P,\mathbb{P}_{n})\Big|E_{P}\left[\frac{\partial}{\partial K'}l(W;K')\Bigr|_{K'=K}\right]=\vect{0}_{d\times{d}}\Big\},
\end{split}
\end{equation}
for $K \in \Sdpp,$ which has the geometric interpretation of being the minimum distance between the empirical distribution and any distribution that satisfies the optimality condition for the precision matrix $K$. Then, using the equivalence of events $\{\Omega\in\mathcal{C}_n(\delta)\}=\{\mathcal{O}(\Omega)\cap\setdef{P}{\mathcal{D}_{c}(P,\mathbb{P}_n)\leq \delta}\neq\varnothing\}=\{R_{n}(\Omega)\leq\delta\}$,~\eqref{OPT_delta1} becomes equivalent to 
\begin{align}
\label{Opt_Delta_Equiv}
\delta = \arg \inf \setdef{\delta>0}{P(R_{n}(\Omega)\leq\delta)\geq 1-\alpha},
\end{align}
i.e., the optimal selection of $\delta$ is the $1-\alpha$ quantile of $R_{n}(\Omega)$. Indeed, the set $\setdef{P}{\mathcal{D}_{c}(P,\mathbb{P}_n)\leq R_n(\Omega)}$ is the smallest ambiguity set around the empirical measure $\mathbb{P}_n$ such that there exists a distribution for which $\Omega$ is an optimal loss minimization parameter. In contrast to previously reported applications of the RWP function on linear regression and logistic regression~\citep{JB-YK-KM:16}, our problem allows for a (finite sample) closed form expression of this function.  This is due to the fact that we have recast the covariance $\Sigma$ as the mean of the random matrix $XX^\top,$ so that the following result gives a nontrivial generalization of~\cite[Example 3]{JB-YK-KM:16}. 

\begin{theorem}[RWP function]
\label{thm-RWP1} 
Consider the cost function  in~\eqref{c_used} for a fixed $\rho\geq 1$. For $K\in \Sdpp,$ consider $R_n(K)$ as in~\eqref{rwp-f}. Then,
\begin{equation}
\label{RWPI-1}
R_{n}(K) = \norm{\vecc(A_n-K^{-1})}_q^{\rho}.
\end{equation}
\end{theorem}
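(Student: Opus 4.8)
The plan is to evaluate the infimum defining $R_n(K)$ by a direct primal argument, using two structural facts: the reference measure $\mathbb{P}_n=\frac1n\sum_{i=1}^n\delta_{W_i}$ is finitely supported, and the constraint $P\in\mathcal{O}(K)$ is the single linear condition $E_P[W]=K^{-1}$ (since $\partial_{K'}l(W;K')|_{K'=K}=W-K^{-1}$).

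First I would unpack the optimal transport cost. Any coupling $\pi$ of a candidate $P$ with $\mathbb{P}_n$ is determined by the conditional laws of $U$ given $V=W_i$; writing these as probability measures $\pi_1,\dots,\pi_n$ on $\Sd$, one has $P=\frac1n\sum_{i=1}^n\pi_i$ and $E_\pi[c(U,V)]=\frac1n\sum_{i=1}^n E_{U\sim\pi_i}\!\left[\norm{\vecc(U)-\vecc(W_i)}_q^{\rho}\right]$, and conversely every such family yields an admissible coupling. Hence
\[
R_n(K)=\inf\Big\{\tfrac1n{\textstyle\sum_{i=1}^n} E_{U\sim\pi_i}\!\big[\norm{\vecc(U)-\vecc(W_i)}_q^{\rho}\big]\ \Big|\ \pi_i\in\mathcal{P}(\Sd),\ \tfrac1n{\textstyle\sum_{i=1}^n}E_{\pi_i}[U]=K^{-1}\Big\}.
\]
Since $t\mapsto\norm{t}_q^{\rho}$ is convex on $\R^{d^2}$ for $\rho\geq1$, Jensen's inequality gives $E_{\pi_i}[\norm{\vecc(U)-\vecc(W_i)}_q^{\rho}]\geq\norm{\vecc(U_i)-\vecc(W_i)}_q^{\rho}$ with $U_i:=E_{\pi_i}[U]\in\Sd$; as both the objective bound and the constraint depend on $\pi_i$ only through $U_i$, replacing each $\pi_i$ by $\delta_{U_i}$ preserves feasibility and does not increase the objective. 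Thus $R_n(K)=\inf\big\{\tfrac1n\sum_{i=1}^n\norm{\vecc(U_i-W_i)}_q^{\rho}\ \big|\ U_i\in\Sd,\ \tfrac1n\sum_{i=1}^n U_i=K^{-1}\big\}$.

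A second application of convexity of $\norm{\cdot}_q^{\rho}$, now to the average over $i$, yields the lower bound $\tfrac1n\sum_{i=1}^n\norm{\vecc(U_i-W_i)}_q^{\rho}\geq\norm{\tfrac1n\sum_{i=1}^n\vecc(U_i-W_i)}_q^{\rho}=\norm{\vecc(K^{-1}-A_n)}_q^{\rho}$, using $\tfrac1n\sum_i W_i=A_n$. For the matching upper bound I would exhibit the minimizer $U_i:=W_i+(K^{-1}-A_n)$: each $U_i\in\Sd$ because $W_i$, $K^{-1}$, $A_n$ are symmetric, the constraint $\tfrac1n\sum_i U_i=A_n+(K^{-1}-A_n)=K^{-1}$ holds, and the objective equals $\norm{\vecc(K^{-1}-A_n)}_q^{\rho}=\norm{\vecc(A_n-K^{-1})}_q^{\rho}$. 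Combining the two bounds gives \eqref{RWPI-1}.

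The only delicate points are the elementary measure-theoretic bookkeeping in the disintegration step (conditioning on an atomic marginal, with repeated $W_i$ handled by mass splitting) and the observation that confining the transport targets to the subspace $\Sd$ rather than all of $\R^{d\times d}$ costs nothing, since the explicit optimizer already lies in $\Sd$; both are routine. Everything else reduces to two invocations of Jensen's inequality plus the closed-form construction, which is exactly what makes this a clean finite-sample analogue of~\cite[Example 3]{JB-YK-KM:16}.
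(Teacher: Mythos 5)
Your proof is correct, but it takes a genuinely different route from the paper. The paper proves Theorem~\ref{thm-RWP1} by first invoking a strong-duality result for the problem of moments (Proposition~\ref{RWP-duality}, adapted from Blanchet--Kang--Murthy and ultimately Isii), which rewrites $R_n(K)$ as a supremum over Lagrange multipliers $\Lambda\in\Sd$ of $-\frac1n\sum_i\sup_{U\in\Sd}\{\trace(\Lambda^\top(U-K^{-1}))-\norm{\vecc(U)-\vecc(W_i)}_q^\rho\}$; the inner suprema are then evaluated via the tightness condition for H\"older's inequality, with a case split between $\rho=1$ and $\rho>1$. You instead work entirely on the primal side: disintegrating the coupling over the atoms of $\mathbb{P}_n$, collapsing each conditional law to its mean by one application of Jensen's inequality, applying Jensen once more to the discrete average to obtain the lower bound $\norm{\vecc(A_n-K^{-1})}_q^\rho$, and matching it with the explicit feasible transport $U_i=W_i+(K^{-1}-A_n)$. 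Your argument is more elementary and self-contained --- it needs no duality machinery, no interior-point qualification, no H\"older tightness set $\mathcal{M}(\Lambda)$, and no case analysis in $\rho$, and it exhibits the optimal coupling explicitly (a uniform translation of the empirical atoms), which the dual route leaves implicit. What the paper's approach buys in exchange is uniformity of technique: the same dual representation underlies both Theorem~\ref{l1} and Theorem~\ref{thm-RWP1}, and for the DRO reformulation in Theorem~\ref{l1} the duality really is needed (the inner problem there is a supremum, not an infimum, over the ambiguity set, so your primal trick does not transfer). The measure-theoretic points you flag --- disintegration against an atomic marginal with mass splitting for repeated $W_i$, and the existence of the conditional means whenever the transport cost is finite (which follows from $\rho\geq1$) --- are indeed routine, so I see no gap.
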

%
%
%
%
%

We now establish important convergence guarantees on the RWP function in the following corollary.

\begin{corollary}[Asymptotic behavior of the RWP function]
\label{asympt-RWP}
Suppose that the conditions of Theorem~\ref{thm-RWP1} hold, and that $E_{\mathbb{Q}_0}(\norm{X}_2^4) < \infty$. Let $H\in\Sd$ be a matrix of jointly Gaussian random variables with zero mean and such that 
$\mathrm{Cov}(h_{ij},h_{k\ell})
=E[w_{ij}w_{k\ell}]-\Sigma_{ij}\Sigma_{k\ell}
=E[x_{i}x_{j}x_{k}x_{\ell}]-\Sigma_{ij}\Sigma_{k\ell}.$
Then,
\begin{equation}
\label{eq-convRWPI}
n^{\rho/2} R_{n}(\Omega) \Rightarrow \norm{\vecc(H)}_q^\rho.
\end{equation}
\end{corollary}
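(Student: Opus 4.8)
The plan is to combine the closed-form expression from Theorem~\ref{thm-RWP1} with a multivariate central limit theorem and the continuous mapping theorem; almost all of the substantive work has already been done in Theorem~\ref{thm-RWP1}, so what remains is essentially a CLT for the sample covariance matrix. First I would invoke Theorem~\ref{thm-RWP1} with $K=\Omega$, using $\Omega^{-1}=\Sigma$, to obtain $R_n(\Omega)=\norm{\vecc(A_n-\Sigma)}_q^{\rho}$. By positive homogeneity of the $\ell_q$-norm this gives $n^{\rho/2}R_n(\Omega)=\norm{\sqrt{n}\,\vecc(A_n-\Sigma)}_q^{\rho}$, so it suffices to establish the limit law of the random vector $\sqrt{n}\,\vecc(A_n-\Sigma)\in\R^{d^2}$ and then push it through $v\mapsto\norm{v}_q^{\rho}$.

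Next I would write $A_n=\frac{1}{n}\sum_{i=1}^{n}W_i$ with $W_i=X_iX_i^\top$ i.i.d.\ copies of $W=XX^\top$, so that $\vecc(A_n)=\frac{1}{n}\sum_{i=1}^n\vecc(W_i)$ is an average of i.i.d.\ random vectors in $\R^{d^2}$ with common mean $\vecc(\Sigma)=E_{\mathbb{Q}_0}[\vecc(W)]$. The hypothesis $E_{\mathbb{Q}_0}(\norm{X}_2^4)<\infty$ ensures, via Cauchy--Schwarz, that $E|x_ix_jx_kx_\ell|\le E_{\mathbb{Q}_0}(\norm{X}_2^4)<\infty$ for all index tuples, hence $\vecc(W)$ has a finite covariance matrix $\Xi$ whose $((i,j),(k,\ell))$ entry is exactly $E[w_{ij}w_{k\ell}]-\Sigma_{ij}\Sigma_{k\ell}=E[x_ix_jx_kx_\ell]-\Sigma_{ij}\Sigma_{k\ell}$. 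The multivariate Lindeberg--L\'evy CLT then yields $\sqrt{n}\,\vecc(A_n-\Sigma)\Rightarrow N(\vect{0}_{d^2},\Xi)$, and $N(\vect{0}_{d^2},\Xi)$ is by definition the law of $\vecc(H)$ for the zero-mean jointly Gaussian matrix $H$ described in the statement (the symmetry constraint $h_{ij}=h_{ji}$ being automatically encoded in $\Xi$ since $w_{ij}=w_{ji}$, making the limit a degenerate but perfectly valid Gaussian on $\R^{d^2}$).

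Finally I would apply the continuous mapping theorem with the map $g:\R^{d^2}\to[0,\infty)$, $g(v)=\norm{v}_q^{\rho}$, which is continuous for every $q\in[1,\infty]$ and $\rho\ge 1$. Together with the identity $n^{\rho/2}R_n(\Omega)=g\big(\sqrt{n}\,\vecc(A_n-\Sigma)\big)$ from the first step, this gives $n^{\rho/2}R_n(\Omega)\Rightarrow g(\vecc(H))=\norm{\vecc(H)}_q^{\rho}$, which is~\eqref{eq-convRWPI}.

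I do not anticipate a genuine obstacle here; the only points requiring a little care are (i) checking that the fourth-moment assumption is precisely what guarantees finiteness of the limiting covariance $\Xi$, so that the CLT applies, and (ii) being careful that vectorizing over all $d^2$ entries rather than the $d(d+1)/2$ distinct ones produces a degenerate Gaussian whose covariance nevertheless coincides with the stated $\mathrm{Cov}(h_{ij},h_{k\ell})$.
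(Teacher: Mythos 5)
Your proposal is correct and follows essentially the same route as the paper's proof: apply Theorem~\ref{thm-RWP1} at $K=\Omega$, invoke the multivariate CLT for $\sqrt{n}\,\vecc(A_n-\Sigma)$, and conclude via the continuous mapping theorem. The paper's version is terser, but your added care about the fourth-moment hypothesis and the degeneracy of the limiting Gaussian on $\R^{d^2}$ only fills in details the paper leaves implicit.
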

\begin{proof}
By the central limit theorem, we observe that $\sqrt{n}(A_n-\Sigma)\Rightarrow H$, 
and by the continuous mapping theorem, we get that
\begin{equation*}
n^{\rho/2}R_n(\Omega) = \norm{\sqrt{n}\vecc(A_n-\Sigma)}_q^{\rho}\Rightarrow \norm{\vecc(H)}_q^\rho.
\end{equation*}
\end{proof}

\begin{remark}
\label{remarking}
Turning our attention back to Theorem~\ref{l1}, a robust selection for the ambiguity size or regularization parameter $\lambda=\delta^{1/\rho}$, as obtained from Theorem~\ref{thm-RWP1}, is 
%
\begin{equation}
\label{oii2}
\delta^{1/\rho} = \inf\setdef{\delta>0}{P(\norm{\vecc(A_n-\Sigma)}_q\leq \delta) \geq
1-\alpha}
\end{equation}
%
As a result, this robust selection for $\lambda$ results in a class of estimators, given by minimizers of the right-hand side of \eqref{eq1}, that are invariant to the choice of $\rho$ in \eqref{c_used}.  Thus, for simplicity, we will set $\rho=1$ in the remainder of the paper. 
%
%
%
\end{remark}

\begin{remark}
\label{remarkin}
Let $r_{1-\alpha}$ be the $(1-\alpha)$ quantile from the distribution of the right-hand side of \eqref{eq-convRWPI}.  Then, for any fixed $\alpha,$ the robust selection $\delta$ in \eqref{oii2} satisfies $n^{1/2}\delta  \rightarrow r_{1-\alpha},$ so that the optimal decay rate of $n^{-1/2}$ for $\lambda$ is automatically chosen by the RWP function.
\end{remark}


As solving \eqref{oii2} requires knowledge of $\Sigma$, we now outline the robust selection (RobSel) algorithm for data-adaptive choice of the regularization parameter $\delta$ for our inverse covariance estimation with an $\ell_p$ penalization parameter.  The special case $p=1$ corresponds to the graphical lasso in \eqref{zero2}, in which case we will also use the notation $\delta = \lambda$.  
The asymptotic result in Corollary~\ref{asympt-RWP} invokes a central limit theorem, and thus motivates the approximation of the RWP function through bootstrapping, which we further explain and evaluate its numerical performance in the next section.  Let $\alpha \in (0,1)$ be a prespecified confidence level and $B$ a large integer such that $(B+1)(1 - \alpha)$ is also an integer.
%
\begin{algorithm}
\renewcommand{\thealgorithm}{}
\caption{RobSel algorithm for estimation of the regularization parameter $\lambda$}\label{alg:euclid}
\begin{algorithmic}[1]
\State For $b = 1,\ldots, B$, obtain a bootstrap sample $X_{1b}^*,\ldots, X_{nb}^*$ by sampling uniformly and with replacement from the data, and compute the bootstrap RWP function $R^*_{n,b}=\norm{A^*_{n,b}-A_n}_q$, 
with the empirical covariance $A^*_{n,b}$ computed from the bootstrap sample.
\State Set $\lambda$ to be the bootstrap order statistic $R^*_{n, ((B+1)(1-\alpha))}$.
\end{algorithmic}
\end{algorithm}

RobSel can potentially provide considerable computational savings over cross-validation in practice. Computing sample covariance matrices for each of $B$ bootstrap samples has cost $O(Bnd^2)$. On the other hand, it is known that each iteration of graphical lasso can cost $O(d^3)$ in the worst case~\citep{RM-TH:12}; therefore, performing an $F$-fold cross-validation to search over $L$-grid of regularization parameters, each taking $T$-iterations of graphical lasso, would cost $O(FLTd^3)$.

\section{Numerical results and analysis}
\label{section_num}

The true precision matrix $\Omega\in\Sdpp$ used to generate simulated data has been constructed as follows. First, generate an adjacency matrix of an undirected {Erd\H{o}s-Renyi} graph with equal edge probability and without self-loops. Then, the weight of each edge is sampled uniformly between $[0.5, 1]$, and the sign of each non-zero weight is positive or negative with equal probability of 0.5. Finally, the diagonal entries of this weighted adjacency matrix  are set to $1$ and the matrix is made diagonally dominant by following a procedure described in~\citep{JP-PW-NZ-JZ:09}, which ensures that the resulting matrix $\Omega$ is positive definite. Throughout this numerical study section, a randomly generated sparse matrix $\Omega$ (edge probability $0.1$ and $d=100$) is fixed. 
Using this $\Omega$, a total of $N=200$ datasets (of varying size $n$) were generated as independent observations from a multivariate zero-mean Gaussian distribution, i.e., $\mathcal{N}(\vect{0}_d,\Omega^{-1})$.

Consider the problem of choosing the regularization parameter $\lambda$ (equivalently, the ambiguity size parameter $\delta$) to obtain graphical lasso estimates $\hat{K}_\lambda$ of $\Omega$ using the simulated datasets. An R software package, \texttt{glasso}, from CRAN was used throughout our numerical experiments. Below, we compare two different criteria for choosing  $\lambda$. The first criterion is \emph{Robust Selection (RS)}, 
which follows our proposed RobSel algorithm with $B=200$ sets of bootstrap samples. 
We present here results mainly for $n > d,$ but additional 
results in the high-dimensional regime $n < d$ can be found in the supplementary material.
The second criterion is a \emph{$5$-fold cross-validation (CV)} procedure. The performance on the validation set is the evaluation of the graphical loss function under the empirical measure of the samples on the training set.

%
%
Recall the elements in the confusion matrix to be true positives (TP), true negatives (TN), false positives (FP) and false negatives (FN). 
We compare model selection performance  
of $\lambda$ chosen by the two different approaches: $\lambda_{RS}$ and $\lambda_{CV}$. The following comparison metrics are used:

\begin{itemize}
\item \emph{True positive rate (TPR) and false detection rate (FDR)}: $TPR=\frac{TP}{TP+FN}$ is the proportion of nonzero entries of $\Omega$ that are correctly identified in $\hat K_\lambda$, and $FDR=\frac{FN}{FN+TP}$ is the proportion of zero entries of $\Omega$ that are incorrectly identified as nonzeros in $\hat K_\lambda$.
\item {\emph{Matthew's Correlation Coefficient (MCC)}}: MCC summarizes all counts in the confusion matrix 
in contrast to other measures like TPR and FDR. More details about MCC is given in supplemental subsection~D.1. 
\end{itemize}
%

In the remainder of this section, we compare the model selection performance (FDR, TPR, MCC) from our simulation results. As mentioned in Remark~\ref{remarkin}, supplemental subsection~D.2 shows that $\lambda_{RS}$ decreases as $n$ increases as is also observed to be true with $\lambda_{CV}$. Furthermore, across the tested range of $\alpha$, the regularization $\lambda_{RS}$ are all larger than $\lambda_{CV}$  for any $n$. 
Then, our distributionally robust representation~\eqref{eq1} allows us to 
observe that even for small values of $n$, CV always chooses a $\lambda$ that corresponds to smaller ambiguity sets than RS. 


To assess the accuracy of RobSel in estimating $\lambda = \delta$ for a given $\alpha$, we approximated the right-hand side of \eqref{oii2} using the $N=200$ data sets and the true covariance $\Sigma,$ giving the ``true" value $\lambda_{RWP}$.
Figures in supplemental subsection D.3 show that the performance obtained by $\lambda_{RWP}$ 
is similar to the one obtained by $\lambda_{RS}$ 
for all comparison metrics.  
This finite sample behavior of RS indicates that the RobSel bootstrap algorithm reliably approximates the desired robustness level corresponding to the choice of $\alpha.$ 
These plots also indicate that  
the RS criterion is more conservative than CV in achieving a lower FDR across different sample sizes, due to providing larger values for $\lambda$ (see supplemental subsection D.2). 
More specifically, Fig.~\ref{FDR} shows that RS gives a better performance than CV in terms of FDR even for smaller values of $n$ and this performance improves even more as $n$ increases.

Moreover, the trade-off between the preference for robustness and the preference for a higher density estimation of nonzero entries in the precision matrix can be observed in terms of the Matthews correlation coefficient (MCC), as shown in  Fig.~\ref{Matt_coeff2}. 
Higher values in the curve of MCC implies values of $\lambda$ that describe a better classification of the entries of $\Omega$ as either zero or nonzero 
(see supplemental subsection D.1 for more details). We observe that CV is placed at the left of the optimal value of the MCC and its under-performance is due to the overestimation of nonzero entries in $\Omega$. On the other hand, RS is to the right of the optimal value and its under-performance is due to 
its conservative overestimation of zero entries in $\Omega$ induced by its robust nature. 
Then, it is up to the experimenter to know which method to use depending on whether she desires to control for the overestimation of zero or nonzero entries. Remarkably, for large numbers of $n$, RS seems to be much closer to the optimal performance than CV according to the MCC, and it does this by maintaining a lower FDR than CV while increasing its TPR.

Our results from the MCC analysis and supplemental subsection D.3 also indicate that we should aim for higher values of $\alpha$ if we want a performance closer to CV in terms of TPR when using the RS criterion, with the advantage of still maintaining a better performance than CV in terms of the FDR. In contrast, if we want more conservative results, we should aim for lower values of $\alpha$. This is a good property of RS: it allows the use of a single parameter $\alpha\in(0,1)$ to adjust the importance of the regularization term. 
As mentioned before section 4, a practical importance of RS is that it provides a candidate for $\lambda$ with potentially considerable computational savings over CV. 
%

\section{Conclusion}

We provide a recharacterization of the popular graphical lasso estimator in \eqref{zero2} as the optimal solution to a distributionally robust optimization problem. To the best of our knowledge, this is the first work to make such a connection for sparse inverse covariance estimation. The DRO form of the estimator leads to a reinterpretation of the regularization parameter as the radius of the distributional amibiguity set, which can be chosen based on a desired level of robustness. We propose the RobSel method for the selection of the regularization parameter and compare it to cross-validation for the graphical lasso. In our numerical experiments, RobSel gives a better false detection rate than cross-validation, and, as the sample size increases, other performance metrics like the true positive rate for the two are similar. Moreover, 
%
%
RobSel is a computationally simpler procedure, notably only performing the graphical lasso algorithm once at the final step rather than repeatedly as is necessary for cross-validation. 
Future work includes theoretical justification for robust selection of the regularization parameter for the graphical lasso in the high-dimensional setting.

\begin{figure}[ht]
\setcounter{subfigure}{0}
\centering
\subfloat{\includegraphics[width=0.70\linewidth]{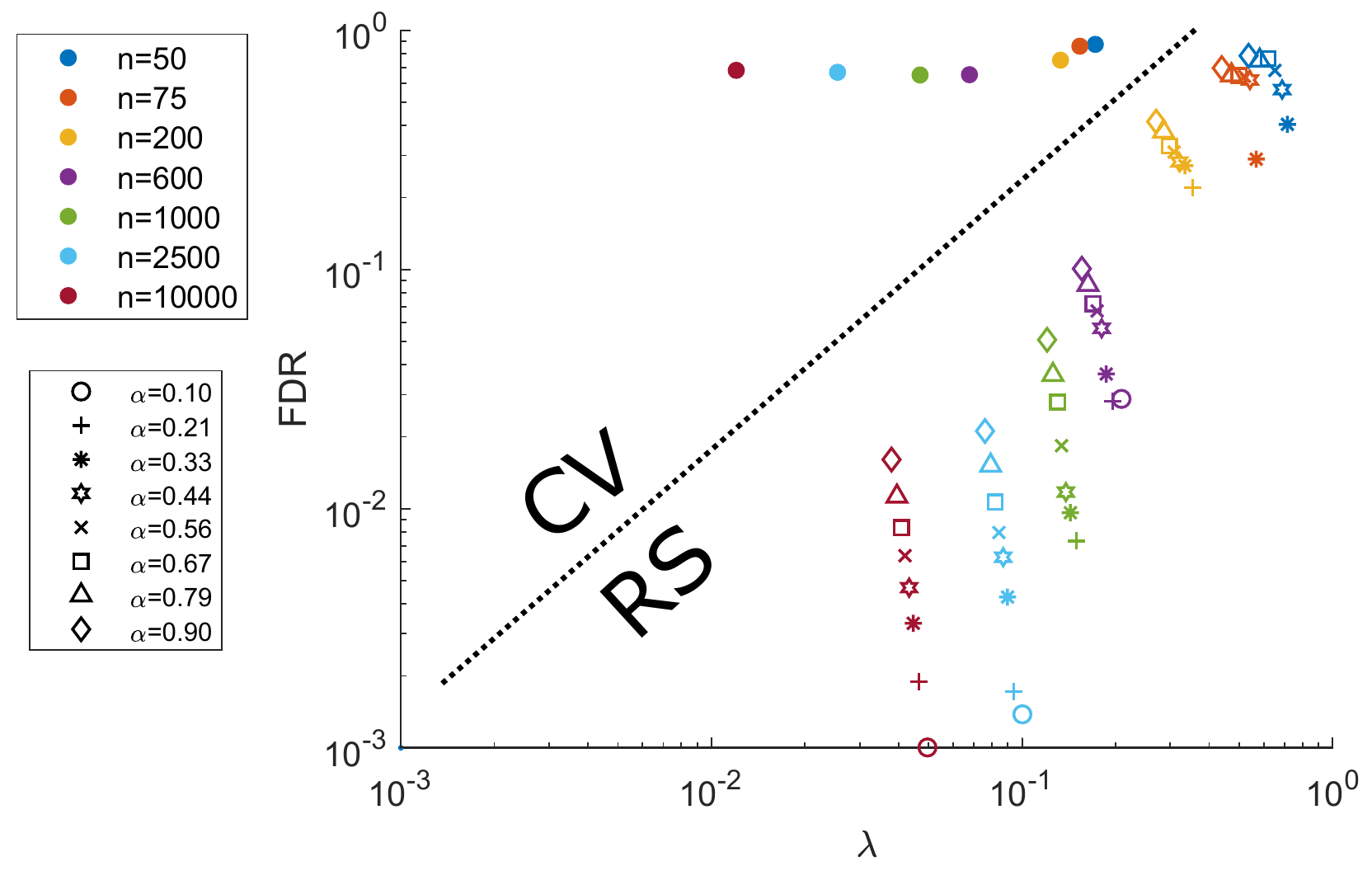}}\\
\caption{
False Detection Rate (FDR) for seven different sample sizes, with both axes in logarithmic scale.  
For each sample size, the average FDR is plotted for both criteria, cross-validation (CV) and RobSel (RS). For RS, a point is plotted with a different symbol for each different value of the parameter $\alpha$ (some points may not be plotted for lower values of $\alpha$, since those values gave no true positive detected, and so FDR was not well-defined).
}
\label{FDR}
\bigbreak
\centering
\subfloat{\includegraphics[width=0.70\linewidth]{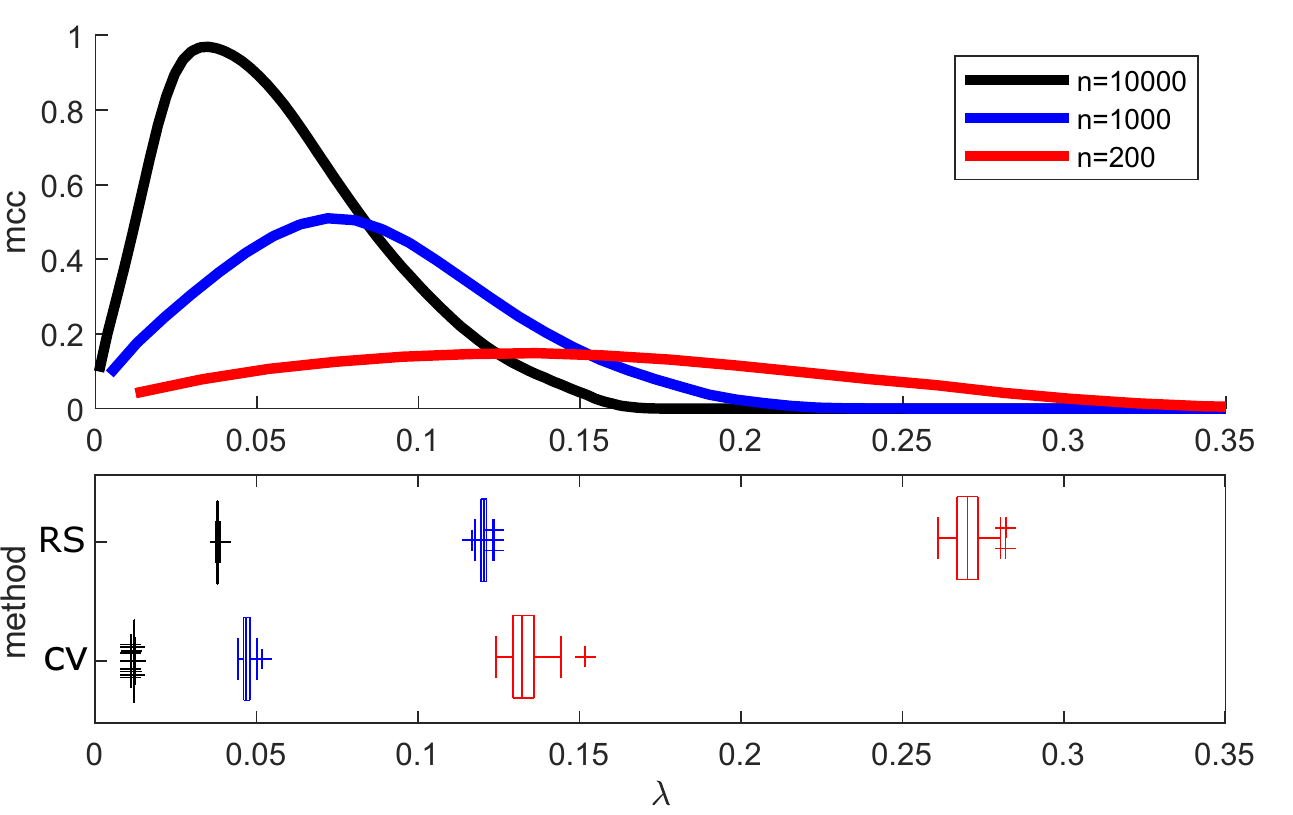}}
\caption{
Matthews 
correlation coefficient (MCC) for three different sample sizes. The curves in the upper plot are the average MCC obtained over $N=200$ datasets.  
The lower plot are boxplots for the CV and RS (with parameter $\alpha=0.9$) methods for the different choices of $n$.
}
\label{Matt_coeff2}
 \end{figure}

\bibliographystyle{abbrvnat}
\bibliography{New}

\clearpage
\appendix

\section{Using a different ambiguity set}
Recall that $X\in\R^{d}$ is a zero-mean random vector with covariance matrix $\Sigma\in\Sdpp$ and measure $\mathbb{Q}_0$, and that we consider an iid random sample $X_1,\cdots,X_n\sim X$, $n > d,$ with empirical measure $\mathbb{Q}_n$. Since we use the graphical loss function as in equation~\eqref{onee1}, we are interested in finding a tractable or closed-form expression for the optimization problem
%
\begin{equation}
\label{oldDRO}
\sup_{Q:\ \mathcal{D}_{c'}(Q,\mathbb{Q}_{n})\leq \delta}E_{Q}[l(X;K)]
\end{equation}
with $K\in\Sdpp$.  Ideally, the solution should be connected to the graphical lasso estimator, since it is one of the most commonly-used sparse inverse covariance estimators in practice.
The ambiguity set in this formulation is specified by the collection of measures $\setdef{Q}{\mathcal{D'}_{c'}(Q,\mathbb{Q}_n)\leq \delta}$, which we now describe. 
Given two probability distributions $Q_1$ and $Q_2$ on $\R^d$ and some transportation cost function $c':\R^d\times\R^d\to[0,\infty)$ (which we will specify below), we define the \textit{optimal transport cost} between $Q_1$ and $Q_2$ as 
\begin{align}
\label{Discrepancy_Def}
\begin{split}
\mathcal{D'}_{c'}(Q_1,Q_2) =\inf \{E_{\pi }&\left[ c'\left(u,v\right) \right]|
\pi \in \mathcal{P}\left(\R^d\times\R^d\right) ,\\
 &\ \pi_{_u}=Q_1, \ \pi _{_v}=Q_2\}
\end{split}
\end{align}
where $\mathcal{P}\left(\R^d\times\R^d\right) $ is the
set of joint probability distributions $\pi$ of $(u,v)$ supported on $\R^d\times\R^d$, and $\pi_{_u}$ and $\pi_{_v}$ denote
the marginals of $u$ and $v$ under $\pi $, respectively. In this paper, we are interested in cost functions
\begin{equation}
\label{c_used2}
c'(u,v)=\norm{u-v}_{q}^\rho,
\end{equation}
with $u,v \in \R^d$, $\rho \geq 1$, $q \in [1,\infty]$.

Now, observe that the function $l(\cdot;K):\R^d\to\R$ is Borel measurable since it is a continuous function. Then, we use the duality result from Proposition 4 of~\cite[version 2]{JB-YK-KM:16} and obtain
\begin{equation}
\label{eq11_old}
\sup_{P:\ \mathcal{D'}_{c'}(Q,\mathbb{Q}_n)\leq \delta}E_{Q} %
\big[l(X;K)\big]=\inf_{\gamma\geq 0}\left\{\gamma\delta+\frac{1}{n}\sum_{i=1}^n\left(\sup_{u\in\R^d}\{l(u;K)-\gamma c'(u,X_i)\}\right)\right\}.
\end{equation}
Let $\Delta:=u-X_i$. Then
\begin{align}
\label{eq12}
\begin{split}
&\sup_{u\in\R^{d}}\{l(u;K)-\gamma c(u,X_i)\} \\
&= \sup_{u\in\R^{d}}\{u^TKu-\log|K|-\gamma\norm{u-X_i}_{q}^\rho\}\\
&=\sup_{ \Delta \in \R^{d}} \{(\Delta + X_i)^TK(\Delta + X_i) - \gamma\norm{\Delta}_q^\rho \} - \log|K|.
%
\end{split}
\end{align}
Replacing this expression back in~\eqref{eq11_old}, it may be difficult, if not impossible, to obtain a closed form optimization problem over $K$. Even if such a simplification is possible, it will not provide the desired connection to the graphical lasso estimator. That is why, in this paper, as outlined in Section~\ref{s_3}, we redefine the ambiguity set to obtain a desired closed form as expressed in Theorem~\ref{l1} in a more transparent way.

\section{Proofs for the paper}

\subsection{Proof of Theorem~\ref{l1}}

\begin{proof}
Consider $K\in\Sdpp$. Observe that the function $l(\cdot;K):\Sd\to\R$ is Borel measurable since it is a continuous function. Then, we use the duality result for the DRO formulation from Proposition~\ref{DRO-dual} from the appendix of this paper and obtain
\begin{equation}
\label{eq11}
\sup_{P:\ \mathcal{D}_{c}(P,\mathbb{P}_n)\leq \delta}E_{P} %
\big[l(W;K)\big]=\inf_{\gamma\geq 0}\left\{\gamma\delta+\frac{1}{n}\sum_{i=1}^n\left(\sup_{W\in\Sd}\{l(W;K)-\gamma c(W,W_i)\}\right)\right\}.
\end{equation}
Let $\Delta:=W-W_i$. Then,
\begin{align}
\label{eq12}
\begin{split}
\sup_{W\in\Sd}&\{l(W;K)-\gamma c(W,W_i)\}\\
&= \sup_{W\in\Sd}\{\trace(KW)-\log|K|-\gamma\norm{\vecc(W)-\vecc(W_i)}_q^\rho\}\\
&= \sup_{\Delta\in\Sd}\{\trace(K(\Delta+W_i))-\gamma\norm{\vecc(\Delta)}_q^\rho\}-\log|K|\\
&= \sup_{\Delta\in\Sd}\{\trace(K\Delta)-\gamma\norm{\vecc(\Delta)}_q^\rho\}+\trace(KW_i)-\log|K|\\
&= \sup_{
\Delta \in \mathcal{M}(K)}\{\norm{\vecc(\Delta)}_q\norm{\vecc(K)}_p-\gamma\norm{\vecc(\Delta)}_q^\rho\}+\trace(KW_i)-\log|K|\\
\end{split}
\end{align}
with $\mathcal{M}(K)=\setdef{\Delta\in\Sd}{\trace(K\Delta) > 0,\, |\Delta_{ij}|^q=\theta |k_{ij}|^p \text{ for some } \theta > 0 }$ 
so that the fourth line follows from selecting a $\Delta\in\Sd$ (since $K\in\Sdpp$) such that Holder's inequality holds tightly (with $\frac{1}{p}+\frac{1}{q}=1$). 
In fact, Holder's inequality holds tightly if and only if 
$\Delta\in\mathcal{M}(K)$~\cite[Chapter 9]{JMS:04}, even for the limiting case $q=\infty$, $p=1$. Observe that 
there exist multiple $\Delta\in\Sd$ that can satisfy Holder's inequality tightly. 
As a consequence, we are still free to choose the magnitude of the $q$-norm of such $\vecc(\Delta)$ (and this is what we will use next).

%
%

Now, the argument inside the supremum in the last line of~\eqref{eq12} is a polynomial function on $\norm{\vecc(\Delta)}_q$. We have to analyze two cases.

\emph{Case 1: $\rho=1$.} In this case we observe that, by setting $\epsilon(\gamma,K)=\sup_{\Delta\in\mathcal{M}(K)}\{\norm{\vecc(\Delta)}_q(\norm{\vecc(K)}_p-\gamma)\}$:
\begin{itemize}
\item if $\gamma\geq\norm{\vecc(K)}_p$, then $\epsilon(\gamma,K)=0$ (in particular, if $\gamma=\norm{\vecc(K)}_p$, the optimizer is $\Delta=\vect{0}_{d\times{d}}$);
\item if $\gamma<\norm{\vecc(K)}_p$, then $\epsilon(\gamma,K)=\infty$;
\end{itemize}
so that, recalling~\eqref{eq11}, due to the outside infimum to be taken over $\gamma\geq 0$; and so we must have that 
\begin{equation*}
\sup_{P:\ \mathcal{D}_{c}(P,\mathbb{P}_n)\leq \delta}E_{P} %
\big[l(W;K)\big]=\inf_{\gamma\geq \norm{\vecc(K)}_p}\left\{\gamma\delta+\frac{1}{n}\sum_{i=1}^n\left(\trace(KW_i)-\log|K|\right)\right\}
\end{equation*}
from which we immediately obtain~\eqref{eq1}.

\emph{Case 2: $\rho>1$.} By differentiation and basic calculus (e.g., using the first and second derivative test) we obtain that the maximizer  $$\Delta^*=\arg\sup_{\Delta\in\mathcal{M}(K)}\{\norm{\vecc(\Delta)}_q\norm{\vecc(K)}_p-\gamma\norm{\vecc(\Delta)}_q^\rho\}$$ is such that $\norm{\vecc(\Delta^*)}_q=\left(\frac{\norm{\vecc(K)}_p}{\gamma\rho}\right)^{\frac{1}{\rho-1}}$. Then, 
\begin{align}
\label{eq13}
\begin{split}
\sup_{W\in\Sd}\{l(W;K)-\gamma c(W,W_i)\} &= \frac{\norm{\vecc(K)}^{\frac{\rho}{\rho-1}}_p}{(\gamma\rho)^{\frac{1}{\rho-1}}}-\gamma\left(\frac{\norm{\vecc(K)}_p}{\gamma\rho}\right)^{\frac{\rho}{\rho-1}}+\trace(KW_i)\\
&\quad-\log|K|\\
&= \norm{\vecc(K)}_p^{\frac{\rho}{\rho-1}}\frac{\rho-1}{\rho^{\frac{\rho}{\rho-1}}\gamma^{\frac{1}{\rho-1}}}+\trace(KW_i)-\log|K|.
\end{split}
\end{align}
Replacing this back in~\eqref{eq11},
\begin{align}
\label{eq14}
\begin{split}
\sup_{P:\ \mathcal{D}_{c}(P,\mathbb{P}_n)\leq \delta}&E_{P} %
\big[l(W;K)\big]\\
&=\inf_{\gamma\geq 0}\left\{\gamma\delta+\frac{1}{n}\sum_{i=1}^n\left(\norm{\vecc(K)}_p^{\frac{\rho}{\rho-1}}\frac{\rho-1}{\rho^{\frac{\rho}{\rho-1}}\gamma^{\frac{1}{\rho-1}}}+\trace(KW_i)-\log|K|\right)\right\}\\
&=\inf_{\gamma\geq 0}\left\{\gamma\delta+\norm{\vecc(K)}_p^{\frac{\rho}{\rho-1}}\frac{\rho-1}{\rho^{\frac{\rho}{\rho-1}}\gamma^{\frac{1}{\rho-1}}}+\trace\left(K\frac{1}{n}\sum_{i=1}^nW_i\right)-\log|K|\right\}\\
&=\inf_{\gamma\geq 0}\left\{\gamma\delta+\norm{\vecc(K)}_p^{\frac{\rho}{\rho-1}}\frac{\rho-1}{\rho^{\frac{\rho}{\rho-1}}\gamma^{\frac{1}{\rho-1}}}\right\}+\trace(KA_n)-\log|K|.
\end{split}
\end{align}
Now, we observe that the argument inside the infimum in the last line of~\eqref{eq14} is a function that grows to infinity when $\gamma\to0$ or $\gamma\to\infty$, so that the minimum is attained for some optimal $\gamma$. By using the first and second derivative tests, we obtain that the minimizer is $\gamma^*=\frac{\norm{\vecc(K)}_p}{\rho\delta^{\frac{\rho-1}{\rho}}}$. Then, replacing this back in~\eqref{eq14} and then this in~\eqref{eq11}, we finally obtain~\eqref{eq1} after some algebraic simplification.
\end{proof}

\subsection{Proof of Lemma~\ref{l111}}
\begin{proof}
Consider $K\in\Sdpp$ and define $g(K)=\sup_{P:\ \mathcal{D}_{c}(P,\mathbb{P}_n)\leq \delta}E_{P}\left[ l\big(W;K\big)\right]$ for a fixed $\delta$. We prove~\eqref{M_M}, by a direct application of Proposition 8 of~\cite[Appendix C]{JB-YK-KM:16}, observing that we satisfy the three conditions for its application:
\begin{enumerate}[label=(\roman*)]
\item $g$ is convex on $\Sdpp$ and finite,\label{one1}
\item there exists $b\in\R$ such that the sublevel set $\kappa_b=\setdef{K}{g(K)\leq b}$ is compact and non-empty, \label{two1}
\item $E_P[l(W;K)]$ is lower semi-continuous and convex as a function of $K$ throughout $\kappa_b$ for any $P\in\setdef{P}{\mathcal{D}_c(P,\mathbb{P}_n)\leq \delta}$. \label{three1}
\end{enumerate}
%
First, we observe that
\begin{align*}
E_{\mathbb{P}_0}[l(W,K)]=E_{\mathbb{P}_0}[\trace(KW)-\log|K|\leq\trace(KE_{\mathbb{P}_0}[W])<\infty,
\end{align*}
since $E_{\mathbb{Q}_0}(\norm{X}^2_2) < \infty$. Then, 
using Theorem~\ref{l1}, the function 
$$
g(K)=\sup_{P:\ \mathcal{D}_{c}(P,\mathbb{P}_n)\leq \delta}E_{P}\left[ l\big(W;K\big)\right]=\trace(KA_n)-\log|K|+\delta^{1/\rho}\norm{\vecc(K)}_p
$$
is finite. Moreover, we also claim it is convex for all $K\in\Sdpp$. This follows from the fact that $\trace(KA_n)-\log|K|$ and $\norm{\vecc(K)}_p$, $p\in[1,\infty]$ are two convex functions on $K\in\Sdpp$, and from the fact that
the  nonnegative weighted sum of two convex functions is another convex function~\citep{SB-LV:04}. This proves~\ref{one1}. 
%

Now, we claim that $g(K)$ is radially unbounded, i.e., $g(K)\to\infty$ as $\norm{\vecc(K)}_p\to\infty$. To see this, recall 
that $\trace(KA_n)-\log|K|$ is a differentiable convex function in $K$ that is minimized whenever $K^{-1}=A_n$, since $A_n$ is invertible for $n>d$ almost surely. Then,
$$
g(K)=\trace(KA_n)-\log|K|+\delta^{1/\rho}\norm{\vecc(K)}_p\geq d-\log|A_n^{-1}|+\delta^{1/\rho}\norm{\vecc(K)}_p,
$$
from which it immediately follows that $g(K)\to\infty$ as $\norm{\vecc(K)}_p\to\infty$.


Now, again, since $g(K)$ is also convex and continuous in $\Sdpp$, we conclude that the level sets $\kappa_b=\setdef{K}{g(K)\leq b}$ are compact and nonempty as long as $b>l(W;K)+\delta^{1/\rho}\norm{\vecc(K)}_p$.
 This proves~\ref{two1}. Moreover, since $l(W;K)$ is convex and continuous on any $K\in\Sdpp$, it follows that $E_P[l(W;K)]$ for any $P\in\setdef{P}{\mathcal{D}_c(P,\mathbb{P}_n)\leq \delta}$ is also continuous and convex on any $K\in\Sdpp$, thus \ref{three1} follows immediately.

%
\end{proof}

\subsection{Proof of Theorem~\ref{thm-RWP1}}

\begin{proof}

Consider $K\in\Sdpp$. Setting $h(U;K)=U-K^{-1}$, it is clear that we satisfy the conditions for applying Proposition~\ref{RWP-duality} in the Appendix, and so we obtain 
\begin{equation}
\label{princip}
R_n(K)=\sup_{\Lambda\in\Sd}\left\{-\frac{1}{n}\sum^n_{i=1}\sup_{U\in\Sd}\{\trace(\Lambda^\top(U-K^{-1}))-\norm{\vecc(U)-\vecc(W_i)}_q^{\rho}\}\right\}
\end{equation}
Now, letting $\Delta:=U-W_i^\top$
\begin{align}
\label{oinki12}
\begin{split}
\sup_{U\in\Sd}\{\trace(\Lambda^\top(U-K^{-1}))-&\norm{\vecc(U)-\vecc(W_i)}_q^{\rho}\}\\
&=
\sup_{\Delta\in\Sd}\{\trace(\Lambda^\top(\Delta+W_i-K^{-1}))-\norm{\vecc(\Delta)}_q^{\rho}\}\\
&=\sup_{\Delta\in\Sd}\{\trace(\Lambda^\top\Delta)-\norm{\vecc(\Delta)}_q^{\rho}\}+\trace(\Lambda^\top(W_i-K^{-1}))\\
&=\sup_{\substack{\Delta\in\mathcal{M}(\Lambda)}}\{\norm{\vecc(\Lambda)}_p\norm{\vecc(\Delta)}_q-\norm{\vecc(\Delta)}_q^{\rho}\}\\
&+\trace(\Lambda^\top(W_i-K^{-1}))
\end{split}
\end{align}
with $\mathcal{M}(\Lambda)$ as in the proof of Theorem~\ref{l1}, 
so that the third line follows from selecting a $\Delta\in\Sd$ such that Holder's inequality holds tightly (with $\frac{1}{p}+\frac{1}{q}=1$), whose existence has been explained in the proof of Theorem~\ref{l1}. Thus, we are still free to choose the magnitude of the $q$-norm of such $\vecc(\Delta)$ (and this is what we will use next).

Now, the argument inside the supremum in the last line of~\eqref{oinki12} is a polynomial function on $\norm{\vecc(\Delta)}_q$. We have to analyze two cases.

\emph{Case 1: $\rho=1$.} In this case we observe that, by setting $\epsilon(\Lambda)=\sup_{\Delta\in\Sd}\{\norm{\vecc(\Delta)}_q(\norm{\vecc(\Lambda)}_p-1)\}$:
\begin{itemize}
\item if $\norm{\vecc(\Lambda)}_p\leq 1$, then $\epsilon(\Lambda)=0$ (in particular, if $\norm{\vecc(\Lambda)}_p<1$, the optimizer is $\Delta=\vect{0}_{d\times{d}}$);
\item if $\norm{\vecc(\Lambda)}_p>1$, then $\epsilon(\Lambda)=\infty$;
\end{itemize}
so that, recalling~\eqref{princip}, we see that if $\norm{\vecc(\Lambda)}_p>1$, then $R_n(K)=-\infty$. Then, we obtain that
\begin{align*}
R_n(K)&=\sup_{\Lambda\in\Sd:\norm{\vecc(\Lambda)}_p\leq 1}\left\{-\frac{1}{n}\sum^n_{i=1}\trace(\Lambda^\top(W_i-K^{-1})\right\}\\
&=\sup_{\Lambda\in\Sd:\norm{\vecc(\Lambda)}_p\leq 1}\left\{-\trace(\Lambda^\top(A_n-K^{-1})\right\}\\
&=\sup_{\Lambda\in\Sd:\norm{\vecc(\Lambda)}_p\leq 1}\left\{\vecc(\Lambda)^\top\vecc(A_n-K^{-1})\right\}\\
&= \norm{\vecc(A_n-K^{-1})}_q
\end{align*}
where the third line results from the fact that $\Lambda$ is a free variable so we can flip its sign, and the last line follows from the the analysis of conjugate norms and the fact that $\Lambda,A_n-K^{-1}\in\Sd$. We thus obtained~\eqref{RWPI-1}.

\emph{Case 2: $\rho>1$.} By differentiation and basic calculus (e.g., using the first and second derivative test) we obtain that the maximizer  $$\Delta^*=\arg\sup_{\Delta\in\Sd}\{\norm{\vecc(\Delta)}_q\norm{\vecc(\Lambda)}_p-\gamma\norm{\vecc(\Delta)}_q^\rho\}$$ is such that $\norm{\vecc(\Delta^*)}_q=\left(\frac{\norm{\vecc(K)}_p}{\rho}\right)^{\frac{1}{\rho-1}}$. Then, replacing this back in~\eqref{princip},
\begin{align*}
\begin{split}
R_n(K)&=\sup_{\Lambda\in\Sd}\left\{-\frac{1}{n}\sum^n_{i=1}\left(\norm{\vecc(\Lambda)}^{\frac{\rho}{\rho-1}}_p\frac{\rho-1}{\rho^{\frac{\rho}{\rho-1}}}+\trace(\Lambda^\top(W_i-K^{-1}))\right)\right\}\\
&=\sup_{\Lambda\in\Sd}\left\{-\norm{\vecc(\Lambda)}^{\frac{\rho}{\rho-1}}_p\frac{\rho-1}{\rho^{\frac{\rho}{\rho-1}}}-\trace(\Lambda^\top(A_n-K^{-1}))\right\}\\
&=\sup_{\Lambda\in\Sd}\left\{\trace(\Lambda^\top(A_n-K^{-1}))-\norm{\vecc(\Lambda)}^{\frac{\rho}{\rho-1}}_p\frac{\rho-1}{\rho^{\frac{\rho}{\rho-1}}}\right\}\\
&=\sup_{\Lambda\in\Sd}\left\{\norm{\vecc(\Lambda)}_p\norm{\vecc(A_n-K^{-1})}_q-\norm{\vecc(\Lambda)}^{\frac{\rho}{\rho-1}}_p\frac{\rho-1}{\rho^{\frac{\rho}{\rho-1}}}\right\}.
\end{split}
\end{align*}
Again, by differentiation and basic calculus, we obtain that the maximizer $\Lambda^*$ is such that $\norm{\vecc(\Lambda^*)}_p=\rho\norm{\vecc(A_n-K^{-1})}_q^{\rho-1}$. Replacing this value back in our previous expression, we get that $R_n(K)=\norm{\vecc(A_n-K^{-1})}_q^{\rho}$, and thus we showed~\eqref{RWPI-1}. 
\end{proof}

\section{Applicability of the dual representations of the RWP function and the DRO formulation}
The dual representations of the RWP function and the DRO formulation for the case in which the space of probability measures is $\mathcal{P}(\R^d\times{\R^d})$ is studied in the paper~\citep{JB-YK-KM:16}. In this paper, we are interested in the case $\mathcal{P}(\Sd\times{\Sd})$. In other words, we consider the samples to be in $\Sd$ instead of $\R^d$. We want to emphasize that the derivations of these dual representations rely on the dual formulation of the so called ``problem of moments" or a specific class of ``Chebyshev-type inequalities" referenced in the work by \cite{KI:62}. The derivation by Isii is actually more general in the sense that is applied to more general probability spaces than the ones used in this paper and in~\citep{JB-YK-KM:16} (in fact, it is stated for general spaces of non-negative measures).

Throughout this section, we consider an integrable function $h:\Sd\times\Sd\to\Sd$, and a lower semi-continuous function $c:\Sd\times{\Sd}\to[0,\infty)$ such that $c(U,U)=0$ for any $U\in\Sd$ and such that the set 
$$
\Omega := \setdef{(U',W')\in\Sd\times\Sd}{c(U',W')<\infty}
$$
is Borel measurable and non-empty. Also consider an iid random sample $W_1,\cdots,W_n\sim W$ with $W$ coming from a distribution on $\mathcal{P}(\Sd)$.  
%
 
Now, let us focus first on the RWP function in the following proposition which parallels~\cite[Proposition 3]{JB-YK-KM:16}.

\begin{proposition}
\label{RWP-duality} 
Consider $K\in\Sdpp$. Let $h(\cdot,K)$ be Borel measurable. Also, suppose that $\vect {0}_{d\times{d}}$ lies in the interior of the convex hull of $\setdef{h(U',C)}{U'\in\Sd}$. Then, 
\begin{align*}
R_n(K) = \sup_{\Lambda\in\Sd}\left\{-\frac{1}{n}\sum^n_{i=1}\sup_{U\in\Sd}\{\trace(\Lambda^\top h(U;K))-c(U,W_i)\}\right\}.
\end{align*}
\end{proposition}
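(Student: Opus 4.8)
The plan is to reproduce the argument behind \cite[Proposition 3]{JB-YK-KM:16}, replacing the Euclidean sample space $\R^d$ by $\Sd$. Since $\Sd$ is a finite-dimensional real vector space equipped with its Borel $\sigma$-algebra, and since the moment-duality theorem of \cite{KI:62} recalled above holds for general (non-negative) measure spaces, this substitution is legitimate and only the notation changes; in particular the matrix-valued constraint $E_P[h(W;K)] = \vect{0}_{d\times d}$ plays the role that the vector-valued constraint plays in the cited proof. I will carry this out in three steps: disintegration against the empirical measure, Lagrangian duality for the resulting moment problem, and collapse of the inner optimization.

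\emph{Step 1: disintegrate against $\mathbb{P}_n$.} Because $\mathbb{P}_n$ is supported on the atoms $W_1,\dots,W_n$, every coupling $\pi\in\mathcal{P}(\Sd\times\Sd)$ with second marginal $\mathbb{P}_n$ can be written as $\pi = \tfrac{1}{n}\sum_{i=1}^n \pi_i\otimes\delta_{W_i}$ for conditional laws $\pi_i\in\mathcal{P}(\Sd)$, and its first marginal is $P = \tfrac{1}{n}\sum_{i=1}^n\pi_i$. Since $P\in\mathcal{O}(K)$ reads $\tfrac{1}{n}\sum_i E_{\pi_i}[h(U;K)] = \vect{0}_{d\times d}$, this turns the definition of $R_n(K)$ into the infinite-dimensional linear program
\[
R_n(K) = \inf\left\{\frac{1}{n}\sum_{i=1}^n E_{\pi_i}[c(U,W_i)] \ \middle|\ \pi_i\in\mathcal{P}(\Sd),\ \frac{1}{n}\sum_{i=1}^n E_{\pi_i}[h(U;K)] = \vect{0}_{d\times d}\right\},
\]
in the tuple $(\pi_1,\dots,\pi_n)$, subject to finitely many linear moment constraints (one for each free entry of a symmetric matrix).

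\emph{Step 2: Lagrangian duality (the main obstacle).} Attach a multiplier $\Lambda\in\Sd$ to the matrix-valued constraint and form
\[
L(\pi_1,\dots,\pi_n;\Lambda) = \frac{1}{n}\sum_{i=1}^n E_{\pi_i}\big[\,c(U,W_i) - \trace(\Lambda^\top h(U;K))\,\big],
\]
so that $R_n(K) = \inf_{\{\pi_i\}}\sup_{\Lambda}L$. The crux of the proof is to exchange the infimum and supremum, i.e.\ to show that there is no duality gap. This is exactly the strong-duality statement for the problem of moments in \cite{KI:62}: the standing assumptions that $c$ is non-negative and lower semi-continuous with $c(U,U)=0$ and that $h(\cdot,K)$ is Borel measurable supply integrability and measurability, while the hypothesis that $\vect{0}_{d\times d}$ lies in the \emph{interior} of the convex hull of $\{h(U';K):U'\in\Sd\}$ is the Slater-type constraint qualification ensuring strict feasibility of the primal, which closes the gap. (In the application $h(U;K)=U-K^{-1}$ one has $\{h(U';K):U'\in\Sd\}=\Sd$, so the condition is automatic.) Granting this, $R_n(K) = \sup_\Lambda\inf_{\{\pi_i\}}L$.

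\emph{Step 3: collapse the inner problem and conclude.} For fixed $\Lambda$ the functional $L$ separates over $i$, and minimizing $E_{\pi_i}[g]$ of the fixed measurable function $g(U) = c(U,W_i) - \trace(\Lambda^\top h(U;K))$ over all of $\mathcal{P}(\Sd)$ returns its (essential, hence pointwise, by lower semicontinuity) infimum $\inf_{U\in\Sd}g(U) = -\sup_{U\in\Sd}\{\trace(\Lambda^\top h(U;K)) - c(U,W_i)\}$, valid also when this supremum is $+\infty$. Substituting back into $\sup_\Lambda\inf_{\{\pi_i\}}L$ yields
\[
R_n(K) = \sup_{\Lambda\in\Sd}\left\{-\frac{1}{n}\sum_{i=1}^n\sup_{U\in\Sd}\{\trace(\Lambda^\top h(U;K)) - c(U,W_i)\}\right\},
\]
which is the claimed identity. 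Steps 1 and 3 are routine and carry over verbatim from the $\R^d$ setting; the only point requiring care is the strong-duality claim in Step 2, which is precisely where the generality of Isii's result is used.
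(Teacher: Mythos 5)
Your proposal is correct and follows essentially the same route as the paper, which simply defers to the proof of Proposition 3 of Blanchet--Kang--Murthy with the sample space $\R^d$ replaced by $\Sd$ and the estimating equation made matrix-valued: disintegration of the coupling against the atomic empirical measure, strong duality for the resulting moment problem via Isii's theorem (with the interior-point hypothesis as the constraint qualification), and collapse of the inner optimization over $\mathcal{P}(\Sd)$ to a pointwise supremum. Your write-up in fact supplies more detail than the paper's one-paragraph proof, but the underlying argument is identical.
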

\begin{proof}
Consider the proof of~\cite[Proposition 3]{JB-YK-KM:16}. If we:
\begin{itemize}
    \item set the estimating equation by $E[h(W;K)]=\vect{0}_{d\times d}$,
    \item set 
\begin{equation*}
R_n(K)=\inf\setdef{E_{\pi}[c(U,W)]}{E_{\pi}[h(U;K)]=\vect{0}_{n\times{n}},\pi_W=\mathbb{P}_n,\pi\in\mathcal{P}(\Sd\times{\Sd})},
\end{equation*}
with $\pi_W$ denoting the marginal distribution of $W$,
\item consider the previously defined $\Omega$,
\end{itemize}
then 
%
%
we obtain that, following the rest of this proof (and using~\cite[Theorem 1]{KI:62} with its special case):
\begin{align*}
R_n(K)&=\sup_{\Lambda\in\Sd}\left\{-\frac{1}{n}\sum^n_{i=1}\sup_{U\in\Sd}\{\vecc(\Lambda)^\top\vecc(h(U;K))-c(U,W_i)\}\right\}\\
&=\sup_{\Lambda\in\Sd}\left\{-\frac{1}{n}\sum^n_{i=1}\sup_{U\in\Sd}\{\trace(\Lambda^\top h(U;K))-c(U,W_i)\}\right\},
\end{align*}
thus obtaining the dual representation of the RWP function. 
\end{proof}

The following proposition for the dual representation of the DRO formulation parallels~\cite[Proposition 1]{JB-YK-KM:16}.

\begin{proposition} 
\label{DRO-dual} 
For $\gamma \geq 0$ and loss functions $l(U';K)$ that are upper semi-continuous in $U'\in\Sd$ for each $K\in\Sdpp$, let 
\begin{equation}  \label{Adjust-Infty}
\phi _{\gamma }(W_{i};K)=\sup_{U\in\Sd}\left\{l(U;K)-\gamma c(U,W_{i}))\right\}.
\end{equation}
Then 
\begin{equation*}
\sup_{P:\ \mathcal{D}_{c}(P,\mathbb{P}_n)\leq \delta }E_P\big[l(W;K)\big]
=\min_{\gamma \geq 0}\left\{ \gamma \delta +\frac{1}{n}\sum_{i=1}^{n}\phi
_{\gamma}(W_{i};K)\right\} .
\end{equation*}
\end{proposition}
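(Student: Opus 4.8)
The plan is to adapt the proof of \cite[Proposition 1]{JB-YK-KM:16} essentially verbatim, the only structural change being that the underlying sample space $\R^d$ is replaced by the Polish space $\Sd$; the measure-theoretic ingredients, and in particular the duality for the problem of moments of \cite[Theorem 1]{KI:62}, are stated for sufficiently general spaces and transfer without modification. The first step is to lift the DRO problem to an infinite-dimensional linear program. Using the definition of $\mathcal{D}_c$ as an infimum over couplings, the DRO value equals $\sup\{E_\pi[l(U;K)] : \pi\in\mathcal{P}(\Sd\times\Sd),\ \pi_{_V}=\mathbb{P}_n,\ E_\pi[c(U,V)]\le\delta\}$. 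Since $\mathbb{P}_n$ is supported on the atoms $W_1,\dots,W_n$, every feasible $\pi$ disintegrates as $\pi=\tfrac1n\sum_{i=1}^n\delta_{W_i}\otimes\pi_i$ with each $\pi_i\in\mathcal{P}(\Sd)$, so the value becomes
\[
\sup\Big\{\tfrac1n\sum_{i=1}^n\int l(u;K)\,d\pi_i(u) \;:\; \pi_i\in\mathcal{P}(\Sd),\ \tfrac1n\sum_{i=1}^n\int c(u,W_i)\,d\pi_i(u)\le\delta\Big\},
\]
a moment problem with a single scalar budget constraint whose Lagrange multiplier will be $\gamma\ge0$.

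The second step is weak duality (the ``$\le$'' direction). For any feasible $(\pi_i)_{i=1}^n$ and any $\gamma\ge0$, add the nonnegative quantity $\gamma\big(\delta-\tfrac1n\sum_i\int c(u,W_i)\,d\pi_i\big)$ to the objective and then use the pointwise bound $l(u;K)-\gamma c(u,W_i)\le\phi_\gamma(W_i;K)$; this yields $\tfrac1n\sum_i\int l(u;K)\,d\pi_i\le\gamma\delta+\tfrac1n\sum_i\phi_\gamma(W_i;K)$. Taking the supremum over $(\pi_i)$ and the infimum over $\gamma\ge0$ gives the easy inequality.

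The third step is strong duality together with attainment of the outer infimum. Here I would invoke \cite[Theorem 1]{KI:62}; the hypotheses to verify are exactly the standing assumptions of this appendix together with the reformulation above: $\Sd$ is Polish, $c$ is lower semicontinuous with $c(U,U)=0$ so that $\Omega$ is Borel and nonempty, $l(\cdot;K)$ is upper semicontinuous, and---the one genuinely nontrivial point---a Slater-type interior condition, which holds whenever $\delta>0$ because the choice $\pi_i=\delta_{W_i}$ (equivalently $P=\mathbb{P}_n$) has transport cost $0<\delta$. Isii's theorem then gives equality of the primal and dual optimal values, i.e.\ the asserted identity with ``$\min$'' replaced by ``$\inf$''. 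To upgrade $\inf$ to $\min$, observe that $\gamma\mapsto\gamma\delta+\tfrac1n\sum_i\phi_\gamma(W_i;K)$ is convex and lower semicontinuous (each $\phi_\gamma(W_i;K)$ is a supremum of affine, nonincreasing functions of $\gamma$), it is finite for all sufficiently large $\gamma$ in the cases of interest (where $l$ is affine in $W$ and $c$ is a norm to a power $\rho\ge1$), and since $\phi_\gamma(W_i;K)\ge l(W_i;K)>-\infty$ and $\delta>0$ it diverges to $+\infty$ as $\gamma\to\infty$; hence its minimum over $[0,\infty)$ is attained.

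The main obstacle is this third step: verifying that the moment-duality result of \cite{KI:62} applies on $\Sd\times\Sd$ and that the interior/Slater condition is correctly formulated and satisfied. Everything else is the routine Lagrangian bookkeeping of the first two steps. As the preamble of this appendix already emphasizes, Isii's framework is stated for general spaces of nonnegative measures, so once the problem has been put in moment-problem form this verification is essentially immediate, which is why the analogous argument of \cite{JB-YK-KM:16} goes through with only cosmetic changes.
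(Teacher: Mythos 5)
Your proposal is correct and follows essentially the same route as the paper: the paper's own proof of Proposition~\ref{DRO-dual} simply defers to Proposition 1/4 of \cite{JB-YK-KM:16} with the remark that the argument carries over from $\R^d$ to $\Sd$ because the underlying moment-problem duality of \cite{KI:62} is stated for general spaces, and your write-up (coupling reformulation over the atoms of $\mathbb{P}_n$, Lagrangian weak duality, Isii's theorem plus a Slater point at $P=\mathbb{P}_n$ for strong duality and attainment) is exactly that adaptation, spelled out in more detail than the paper provides.
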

\begin{proof}
The proof for the dual representation of the DRO for our domain of symmetric matrices is also very similar to the one described in Proposition 4 of~\cite[version 2]{JB-YK-KM:16}, just by following appropriate similar changes as we did for the proof of~\ref{RWP-duality}.
%
%
%
\end{proof}

\section{Figures, tables and additional analysis for the numerical results (Section 4 of the paper)}

\subsection{Matthews correlation coefficient analysis}\label{section:mcc}

Let true positives (TP) be the number of nonzero off-diagonal entries of $\Omega$ that are correctly identified, 
false negatives (FN) be the number of its nonzero off-diagonal entries that are incorrectly identified as zeros, false positives (FP) be the number of its zero off-diagonal entries that are incorrectly identified as nonzeros, and true negatives (TN) be the number of its zero off-diagonal entries that are correctly identified. 
%
Given the estimated precision matrix $\hat{K}$, the Matthews correlation coefficient (MCC)~\citep{P:11} is defined as:
\begin{equation}
MCC = \frac{TP\cdot TN-FP\cdot FN }{\sqrt{(TP+FP)(TP+FN)(TN+FP)(TN+FP)}},    
\end{equation}
and, whenever the denominator is zero, it can be arbitrarily set to one.
It can be shown that $MCC\in[-1,+1]$, and $+1$ is interpreted as a perfect prediction (of both zero and nonzero values), $0$ is interpreted as prediction no better than a random one, and $-1$ is interpreted as indicating  total disagreement between prediction and observation.

MCC has been argued to be one of the most informative coefficients for assessing the performance of binary classification (in this case, classifying if an entry of the precision matrix is zero or nonzero) since it summarizes all information from the TP, TN, FP and FN quantities~\citep{Chicco2017,P:11}, in contrast to other measures like TPR and FPR.

\subsection{Regularization parameters}
\label{supp:regularization-parameters}
All the plots related to the RS criterion for choosing $\lambda$ have in their x-axis the values\\ $\alpha\in\{0.10,0.21,0.33,0.44,0.56,0.67,0.79,0.90\}$. We study the cases for sample sizes $n\in\{75,200,1000\}$.

\begin{figure}[H]
\setcounter{subfigure}{0}
\centering
\subfloat[RS]{\includegraphics[width=0.33\linewidth]{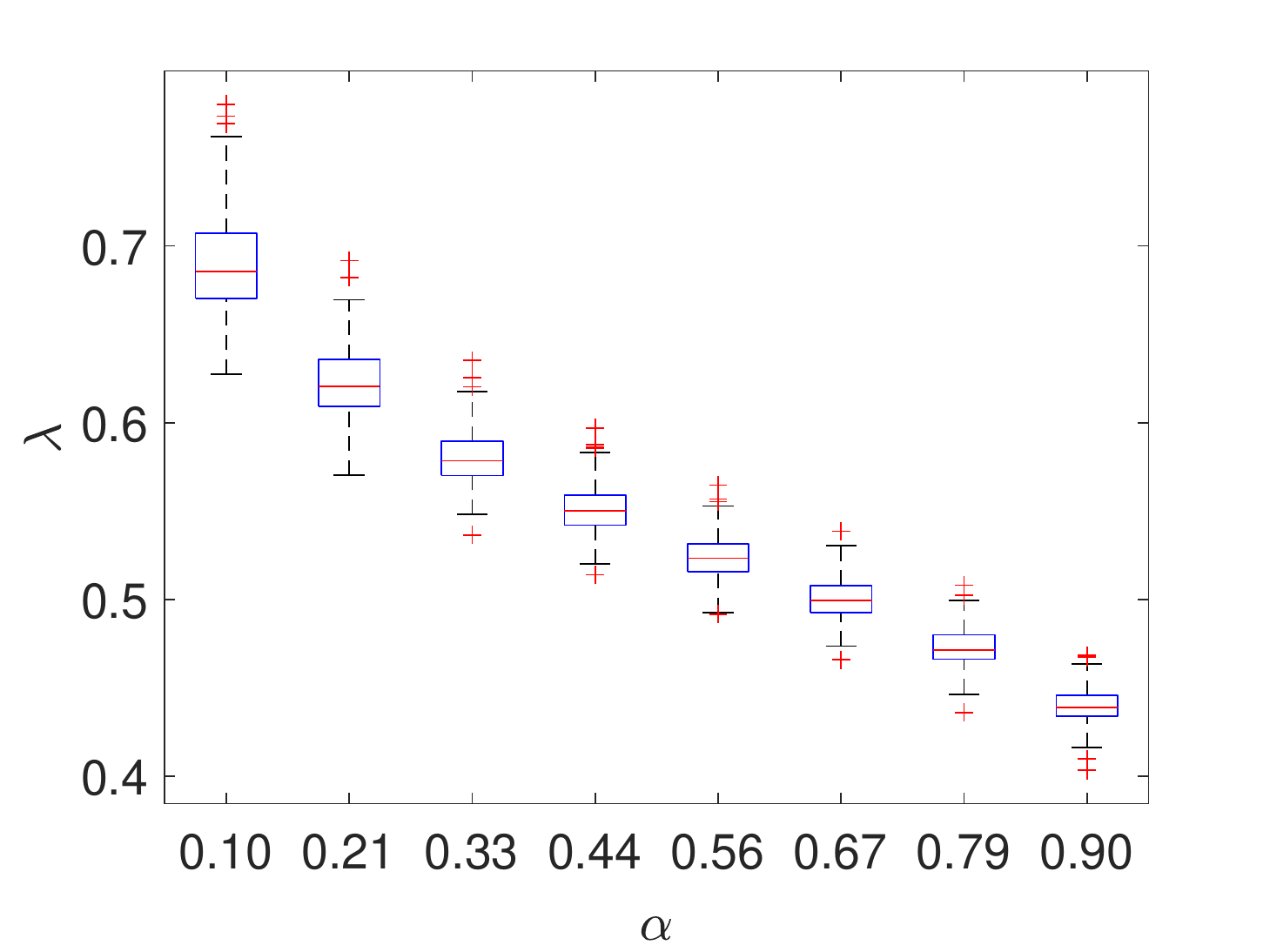}}
\subfloat[CV]{\includegraphics[width=0.33\linewidth]{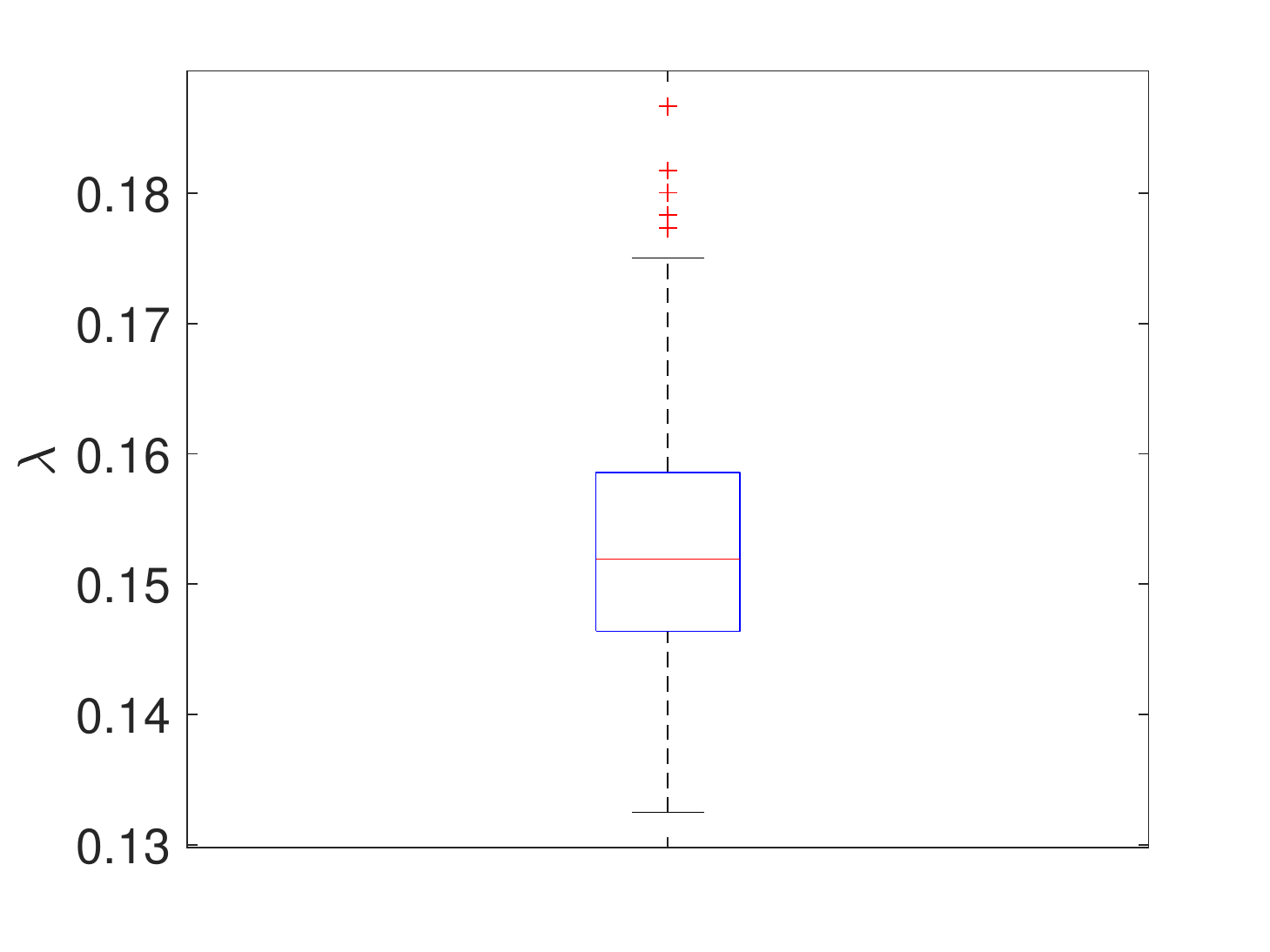}}
\caption{$n=75$}
 \end{figure}
 
 \begin{figure}[H]
\setcounter{subfigure}{0}
\centering
\subfloat[RS]{\includegraphics[width=0.33\linewidth]{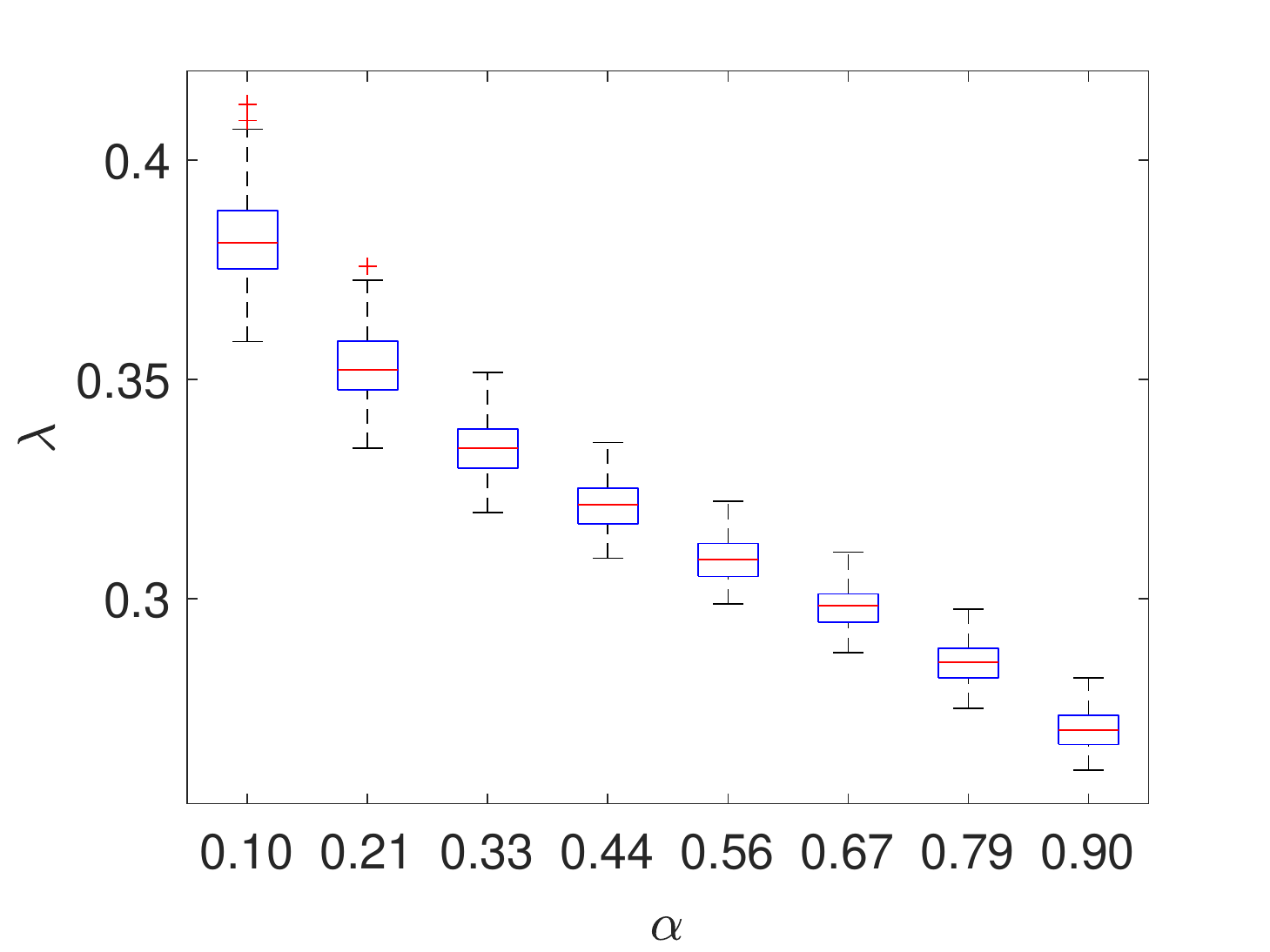}}
\subfloat[CV]{\includegraphics[width=0.33\linewidth]{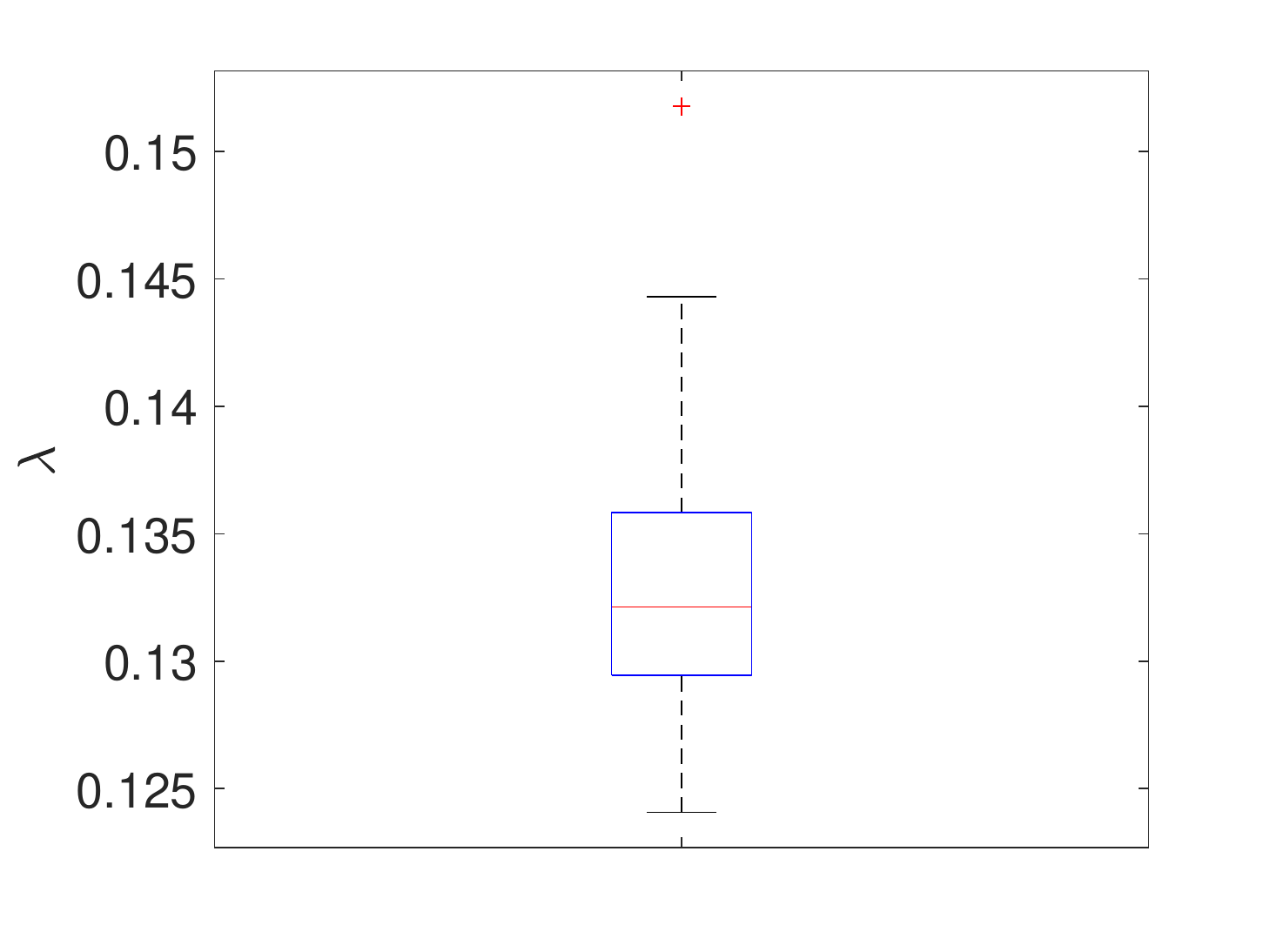}}
\caption{$n=200$}
 \end{figure}
 
  \begin{figure}[H]
\setcounter{subfigure}{0}
\centering
\subfloat[RS]{\includegraphics[width=0.33\linewidth]{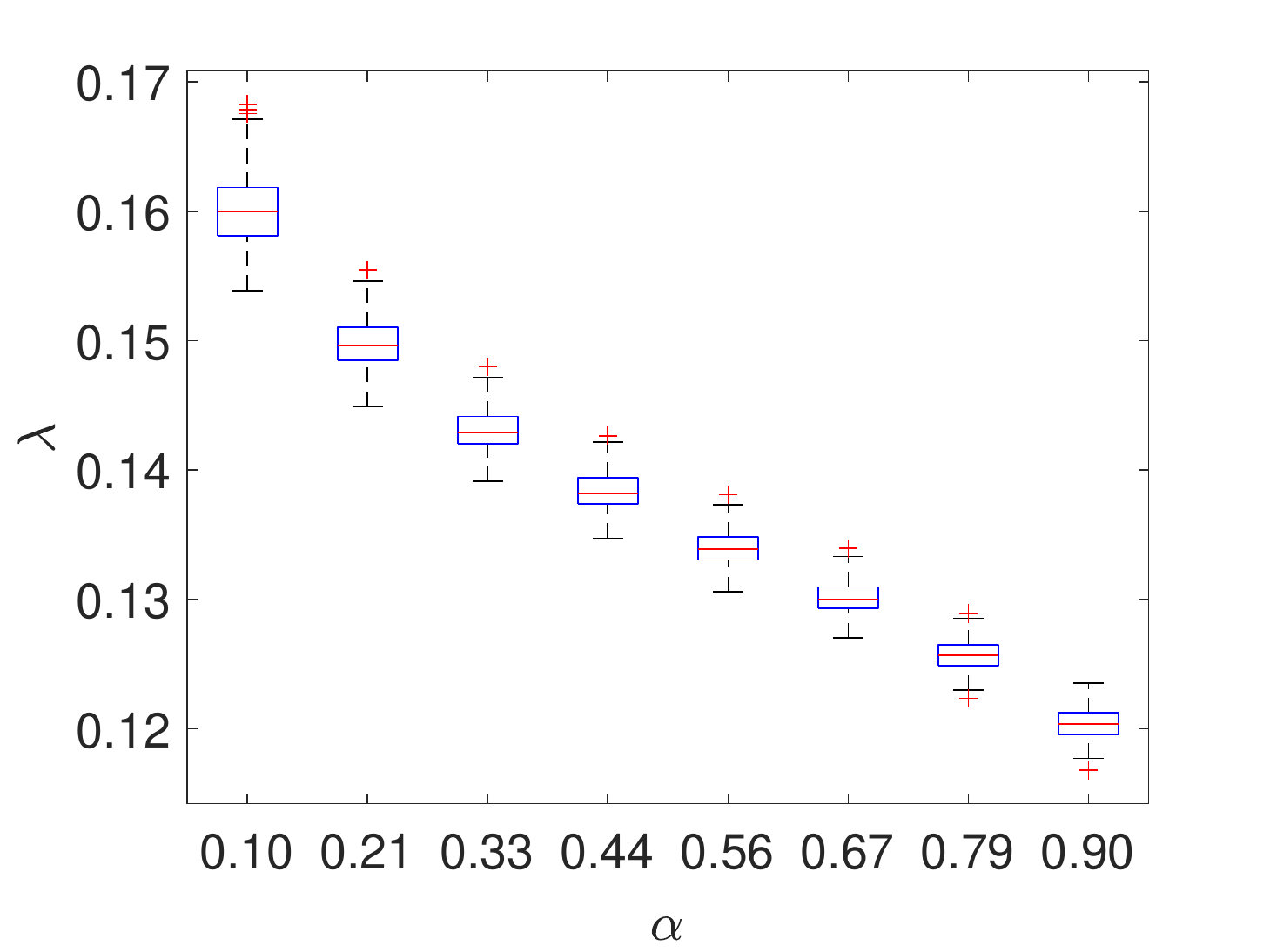}}
\subfloat[CV]{\includegraphics[width=0.33\linewidth]{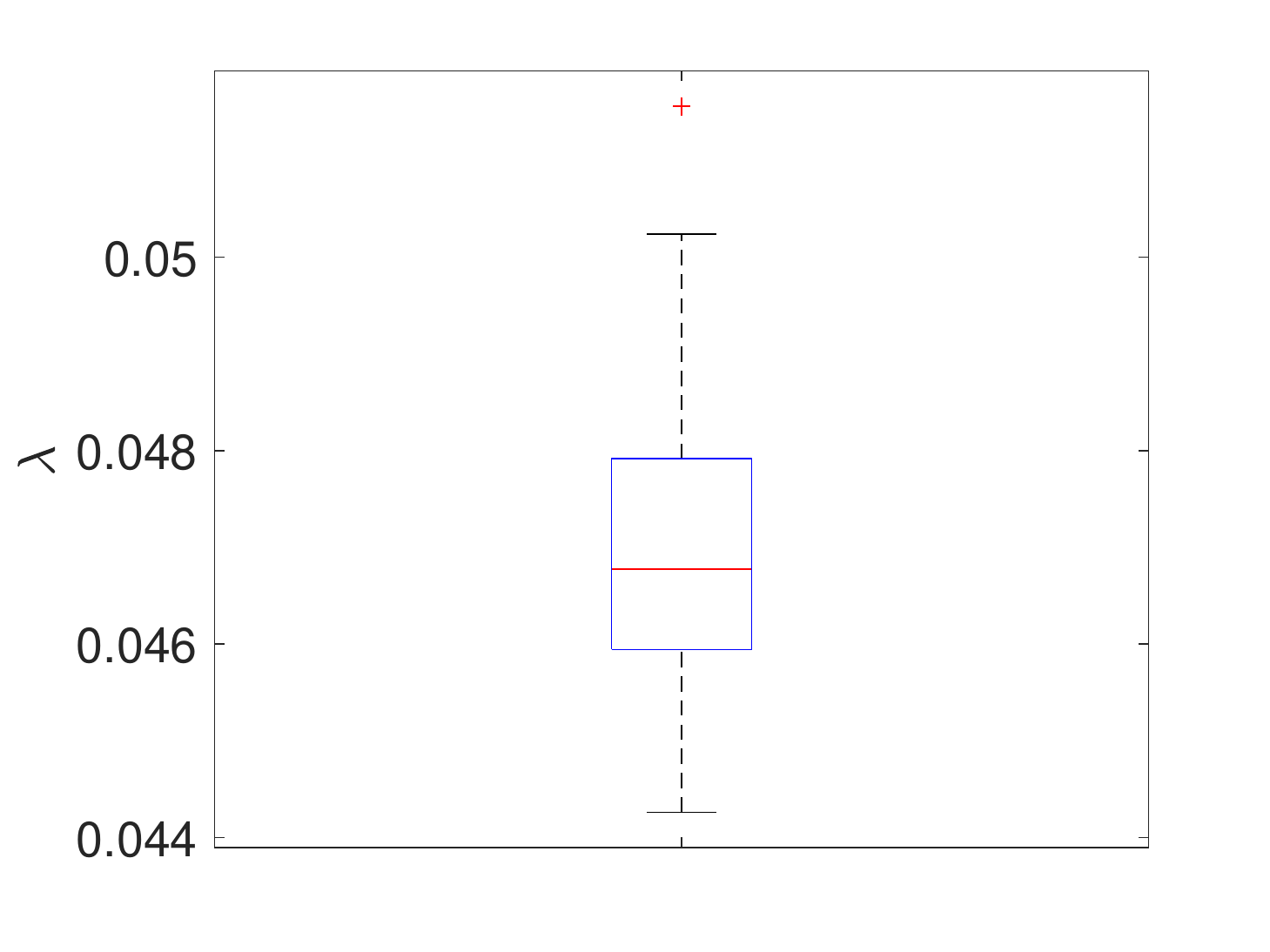}}
\caption{$n=1000$}
 \end{figure}

%

\subsection{Performance figures for different choices of the regularization parameter}
\label{supp:performance-metrics}

All the plots related to the RS and RWP criteria for choosing $\lambda$ have in their x-axis the values \\$\alpha\in\{0.10,0.21,0.33,0.44,0.56,0.67,0.79,0.90\}$. We study the cases for sample sizes $n\in\{75,200,1000\}$.

\subsubsection{$n=75$}

\begin{figure}[H]
\setcounter{subfigure}{0}
\centering
\subfloat[RS]{\includegraphics[width=0.27\linewidth]{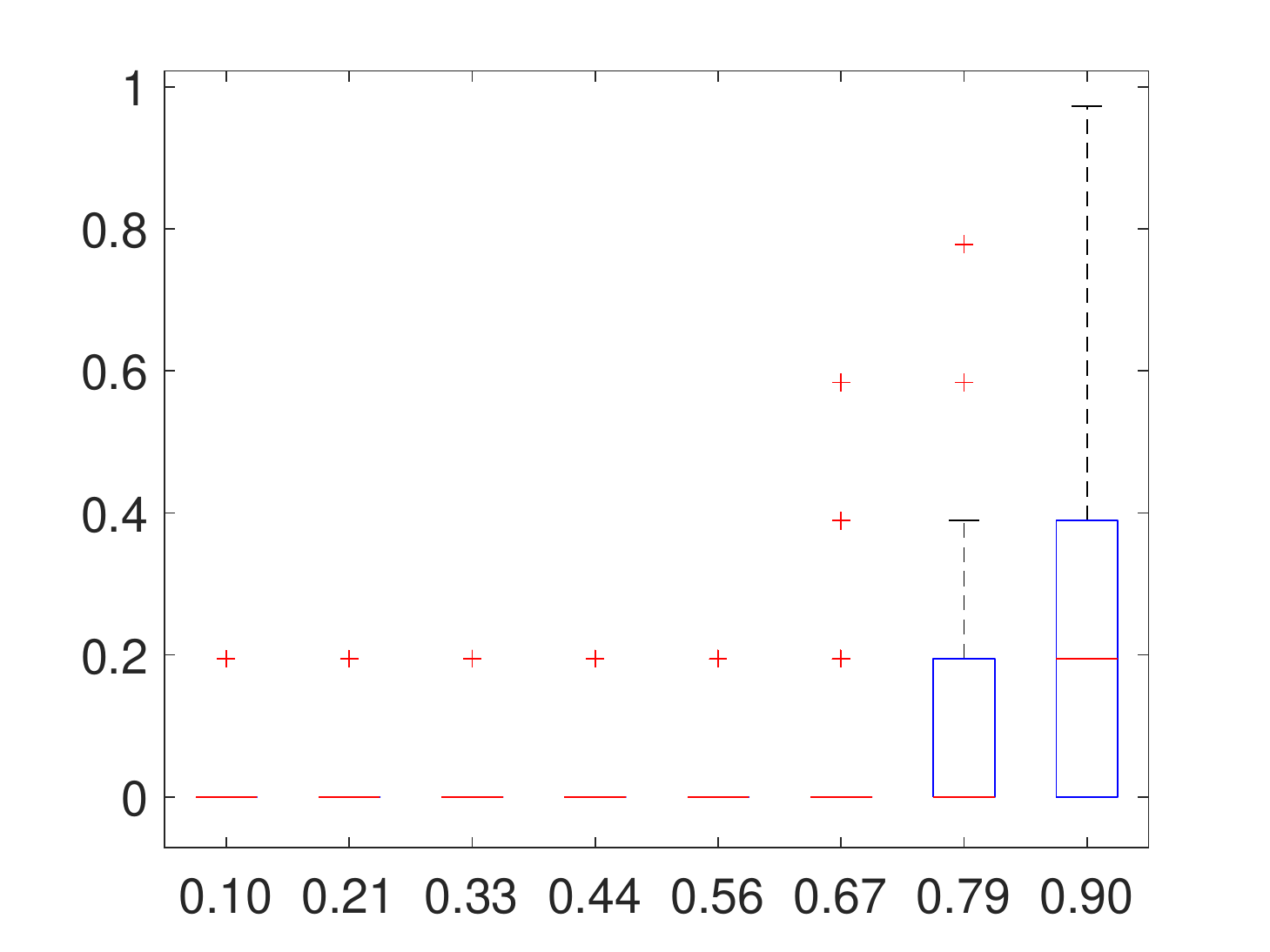}}
\subfloat[CV]{\includegraphics[width=0.27\linewidth]{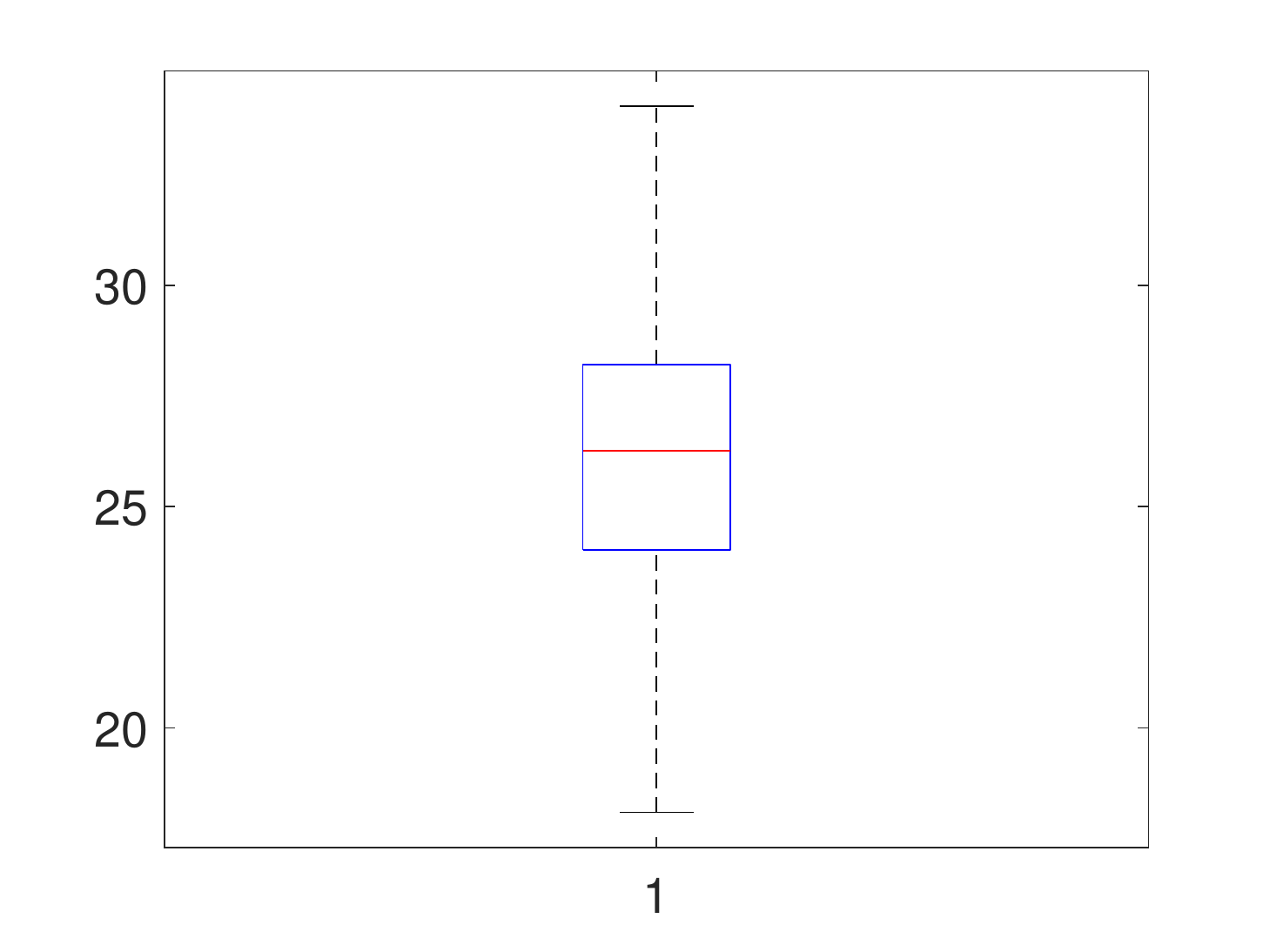}}
\subfloat[RWP]{\includegraphics[width=0.27\linewidth]{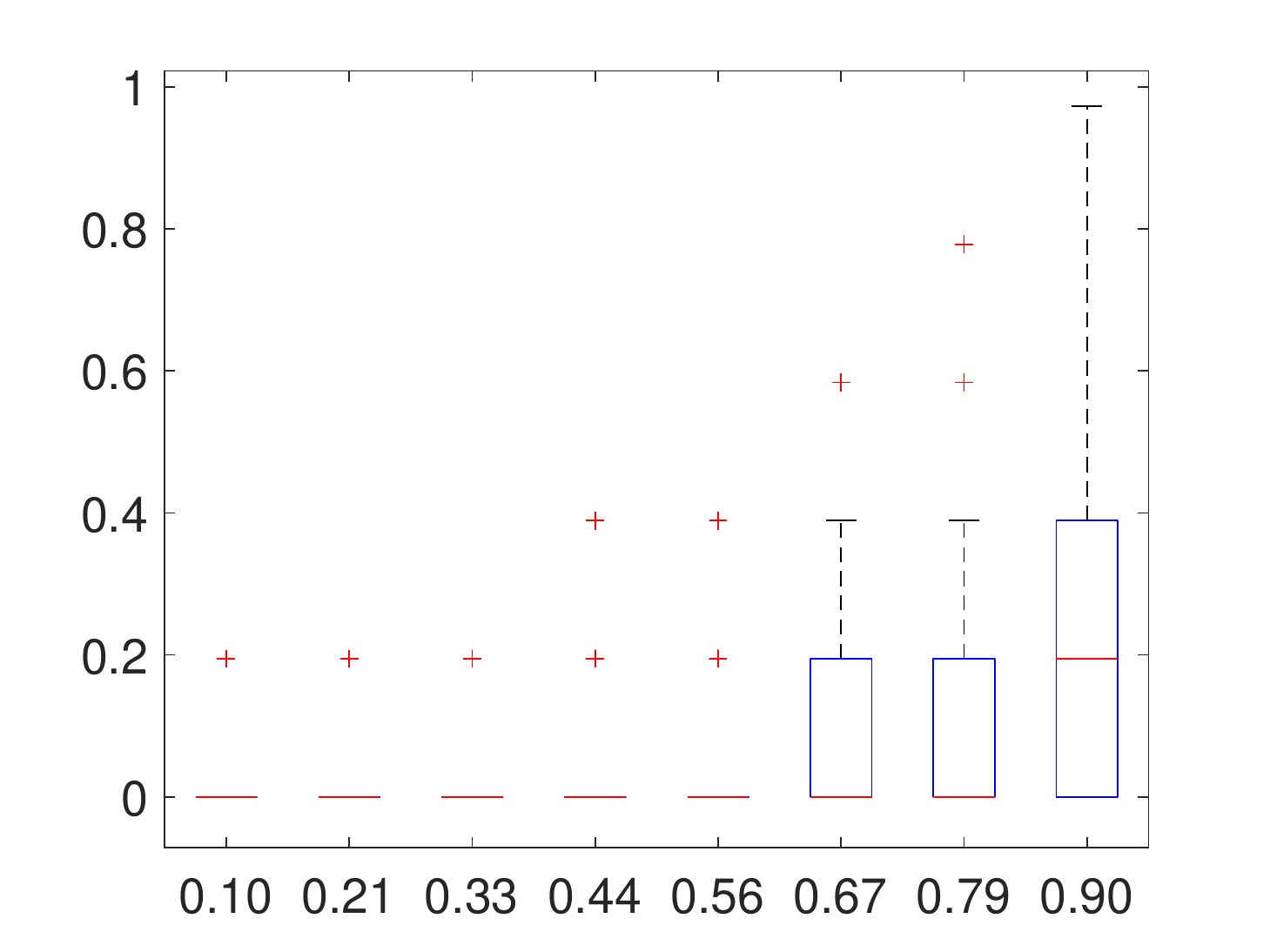}}
\caption{True positive rate ($\%$)}
 \end{figure}

%
\begin{figure}[H]
\setcounter{subfigure}{0}
\centering
\subfloat[RS]{\includegraphics[width=0.27\linewidth]{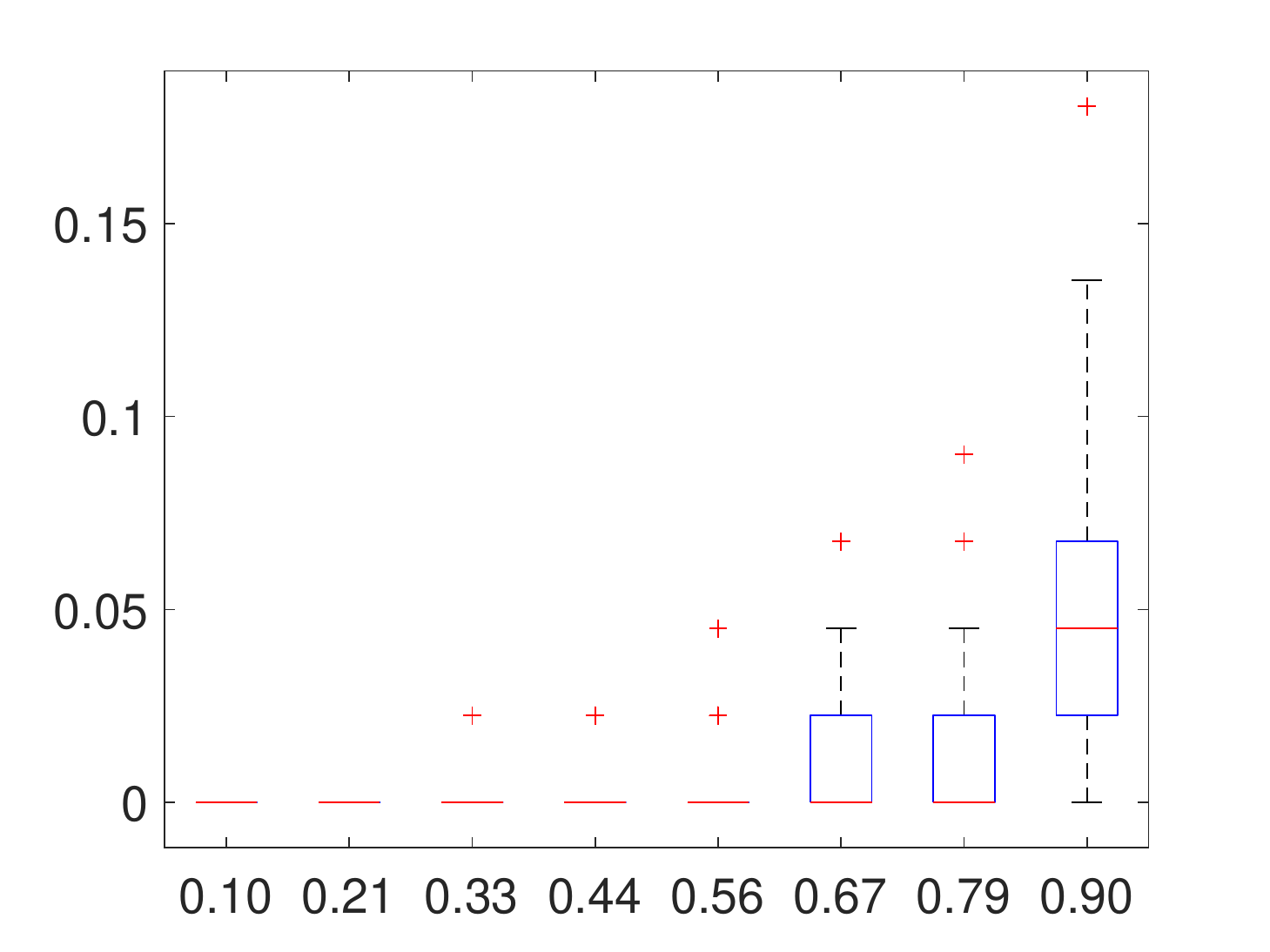}}
\subfloat[CV]{\includegraphics[width=0.27\linewidth]{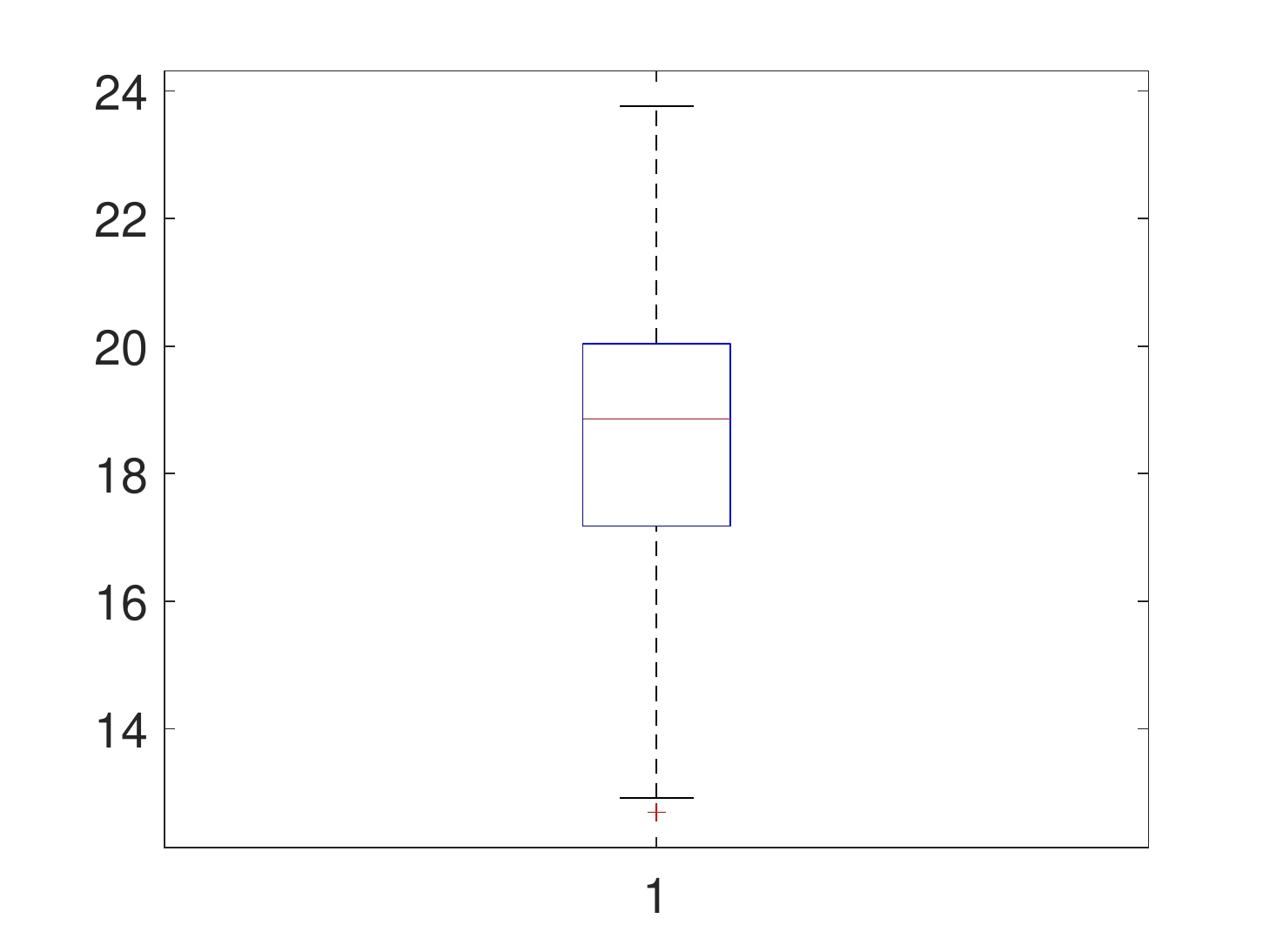}}
\subfloat[RWP]{\includegraphics[width=0.27\linewidth]{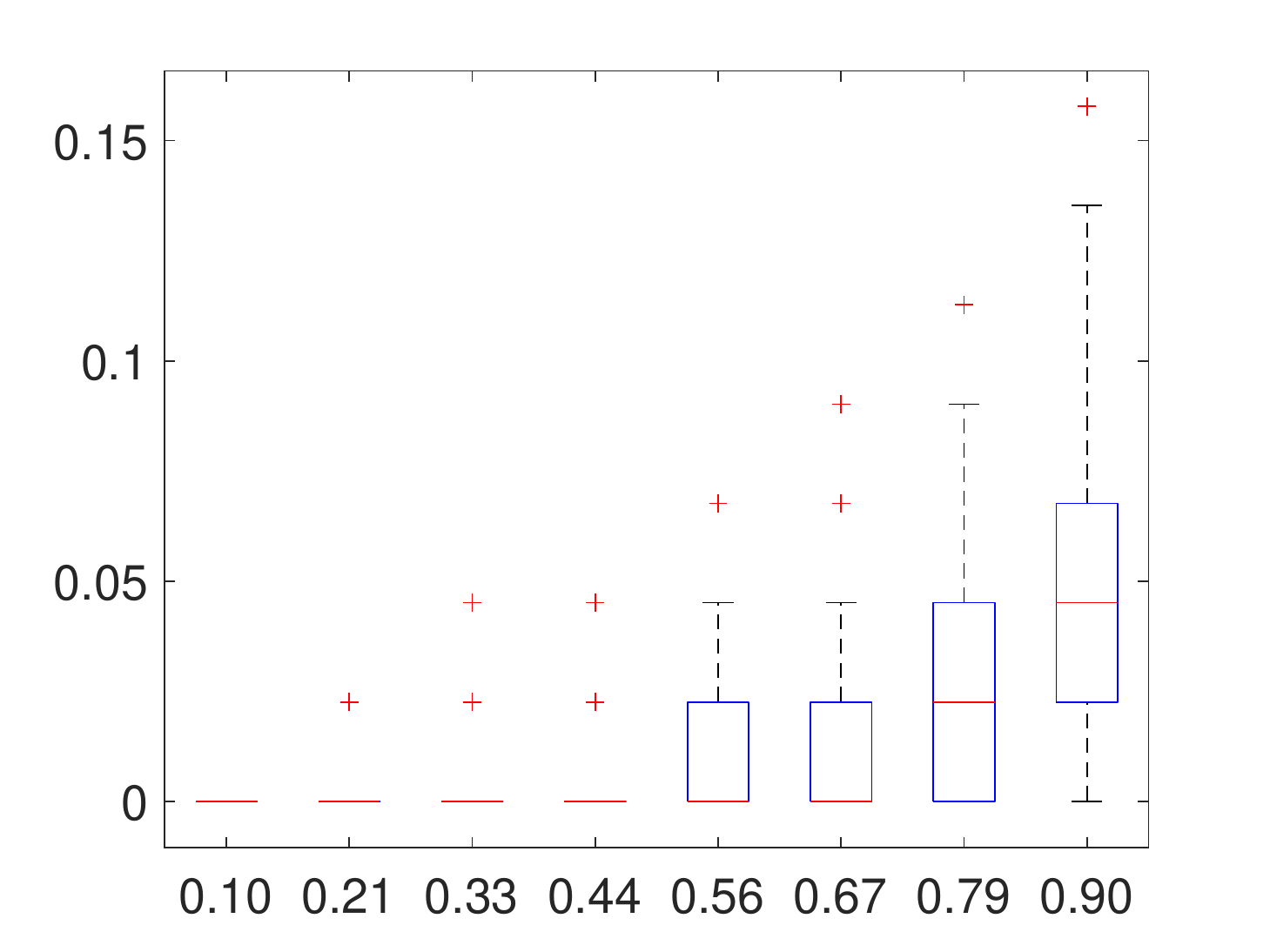}}
\caption{False detection rate ($\%$)}
 \end{figure}
 
\subsubsection{$n=200$}

\begin{figure}[H]
\setcounter{subfigure}{0}
\centering
\subfloat[RS]{\includegraphics[width=0.27\linewidth]{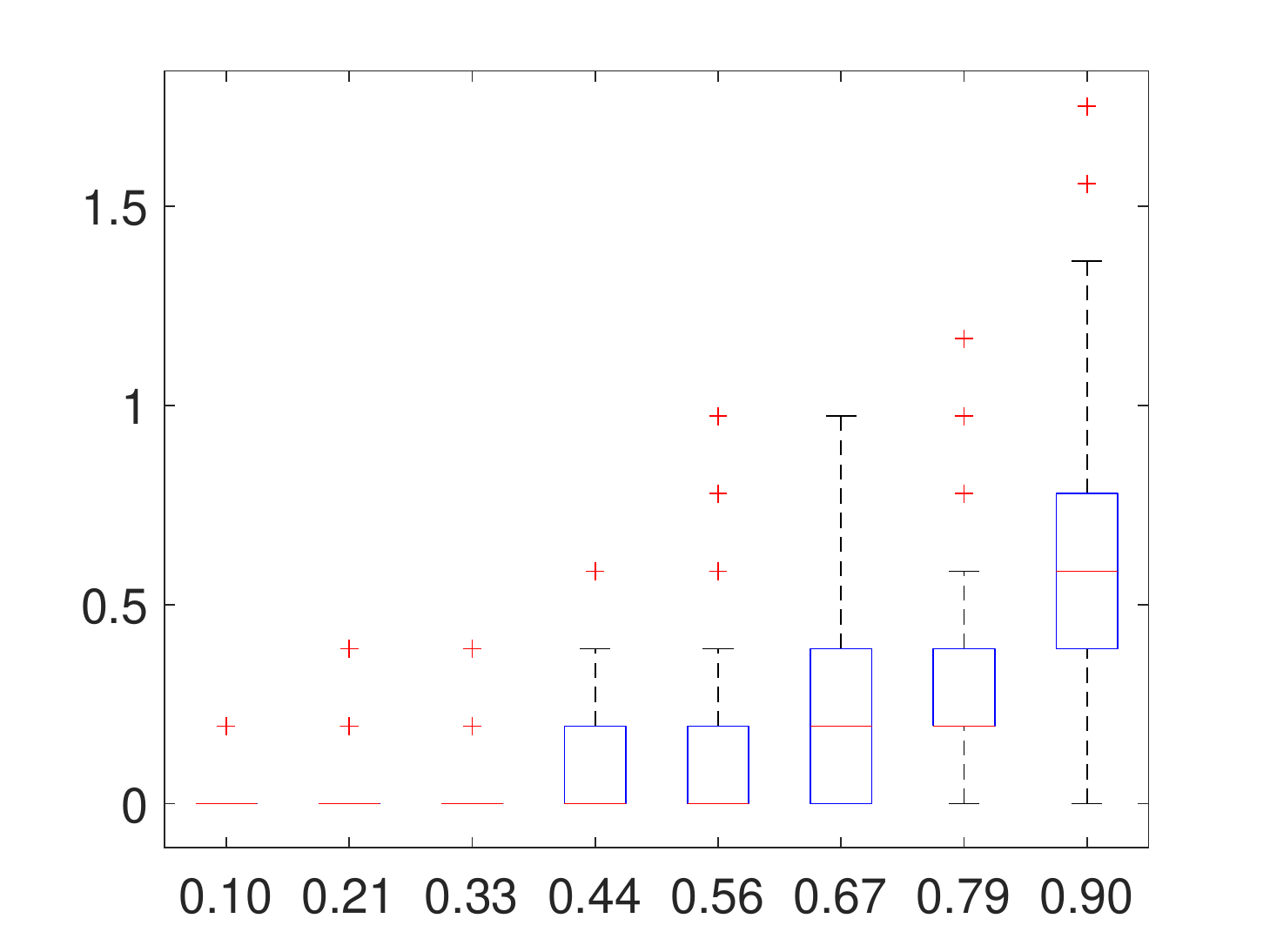}}
\subfloat[CV]{\includegraphics[width=0.27\linewidth]{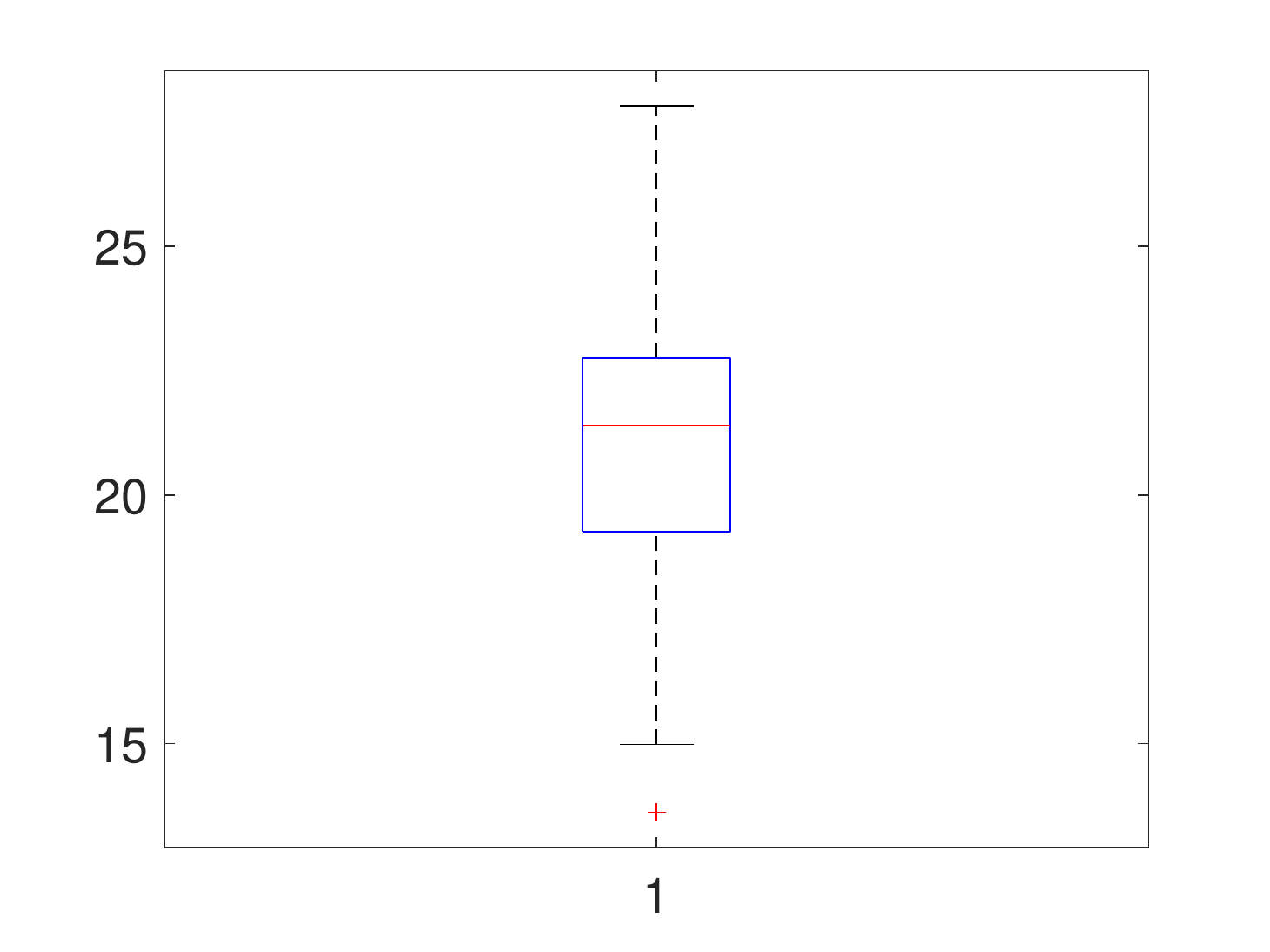}}
\subfloat[RWP]{\includegraphics[width=0.27\linewidth]{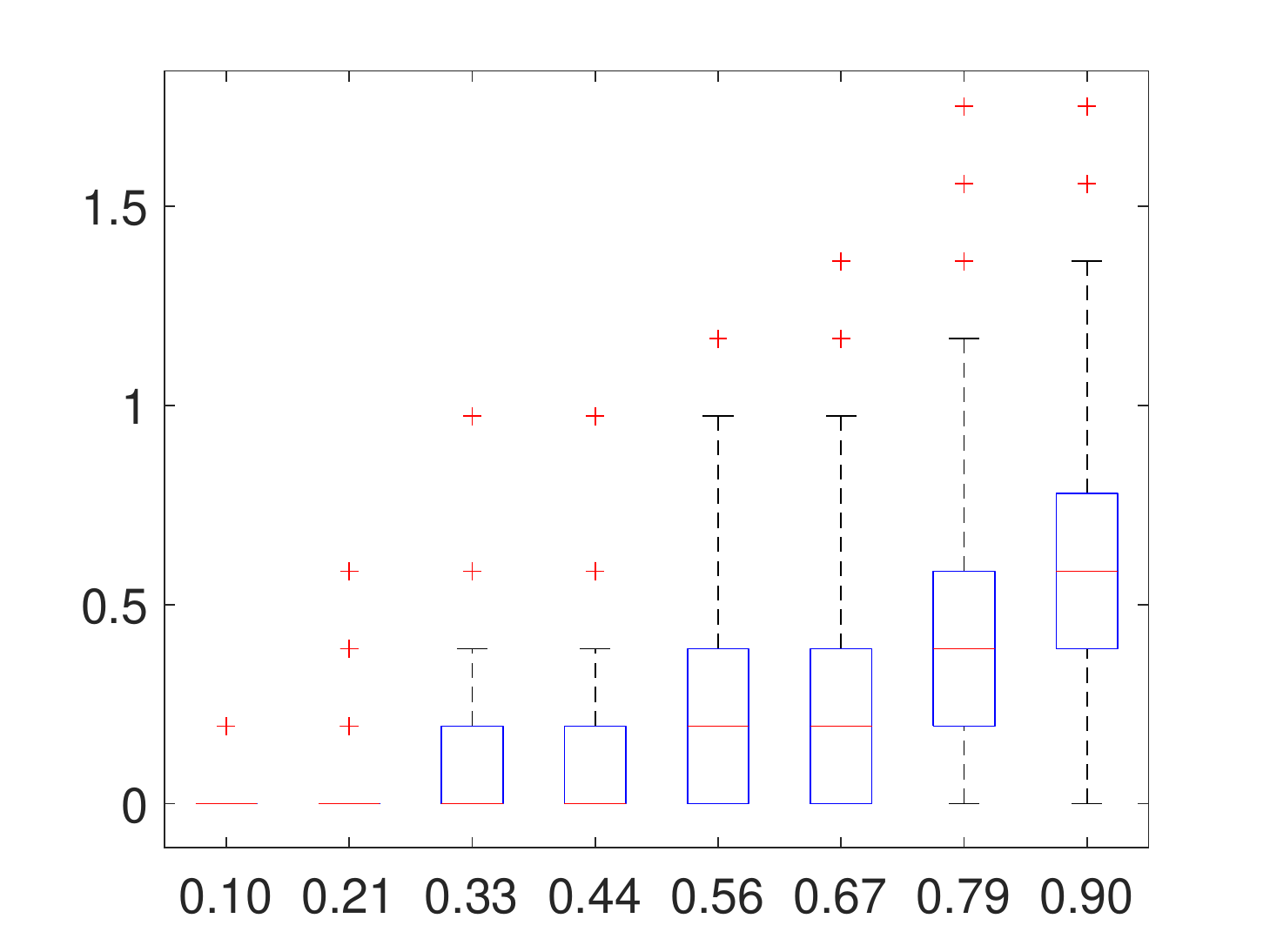}}
\caption{True positive rate ($\%$)}
 \end{figure}

\begin{figure}[H]
\setcounter{subfigure}{0}
\centering
\subfloat[RS]{\includegraphics[width=0.27\linewidth]{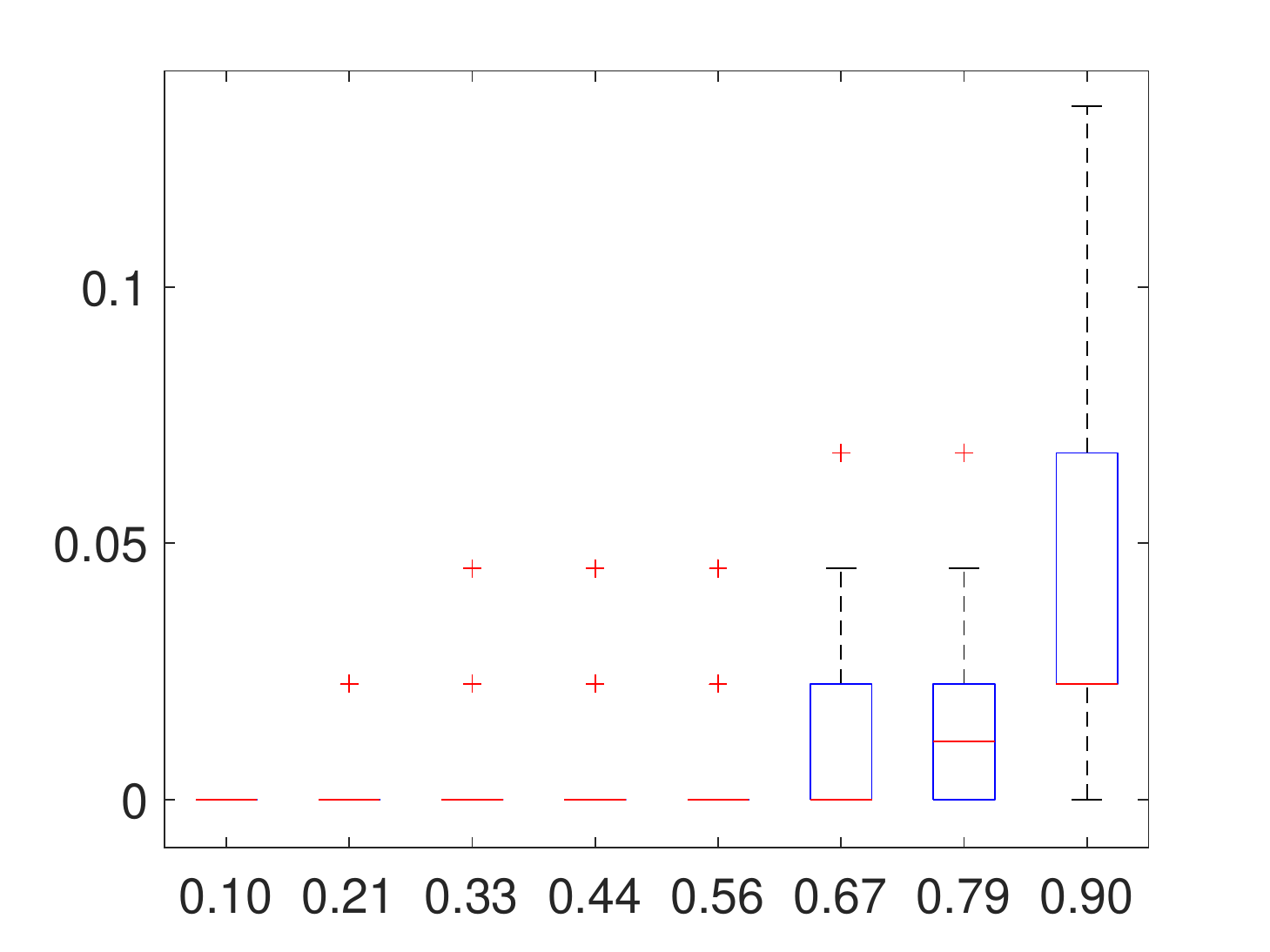}}
\subfloat[CV]{\includegraphics[width=0.27\linewidth]{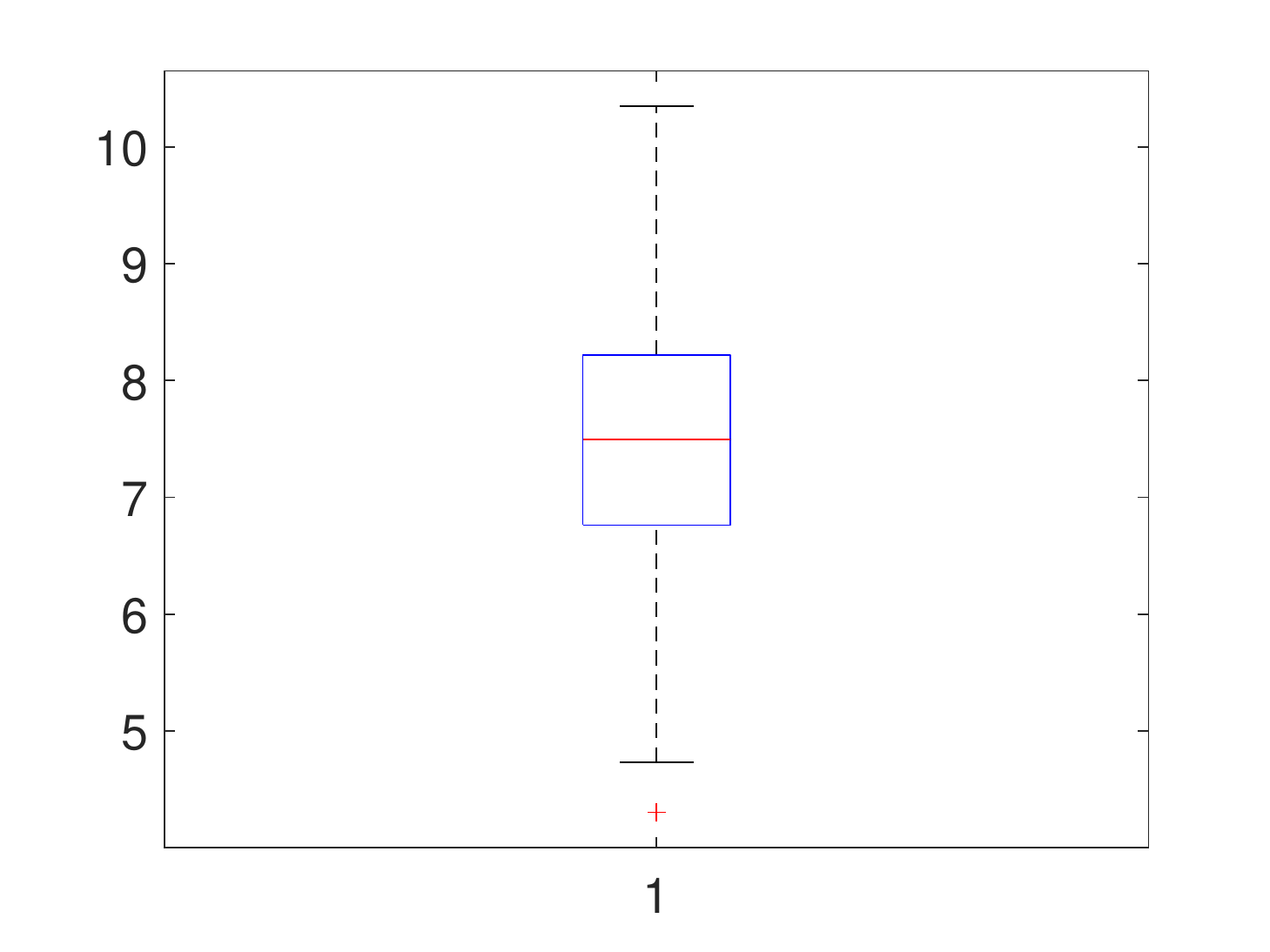}}
\subfloat[RWP]{\includegraphics[width=0.27\linewidth]{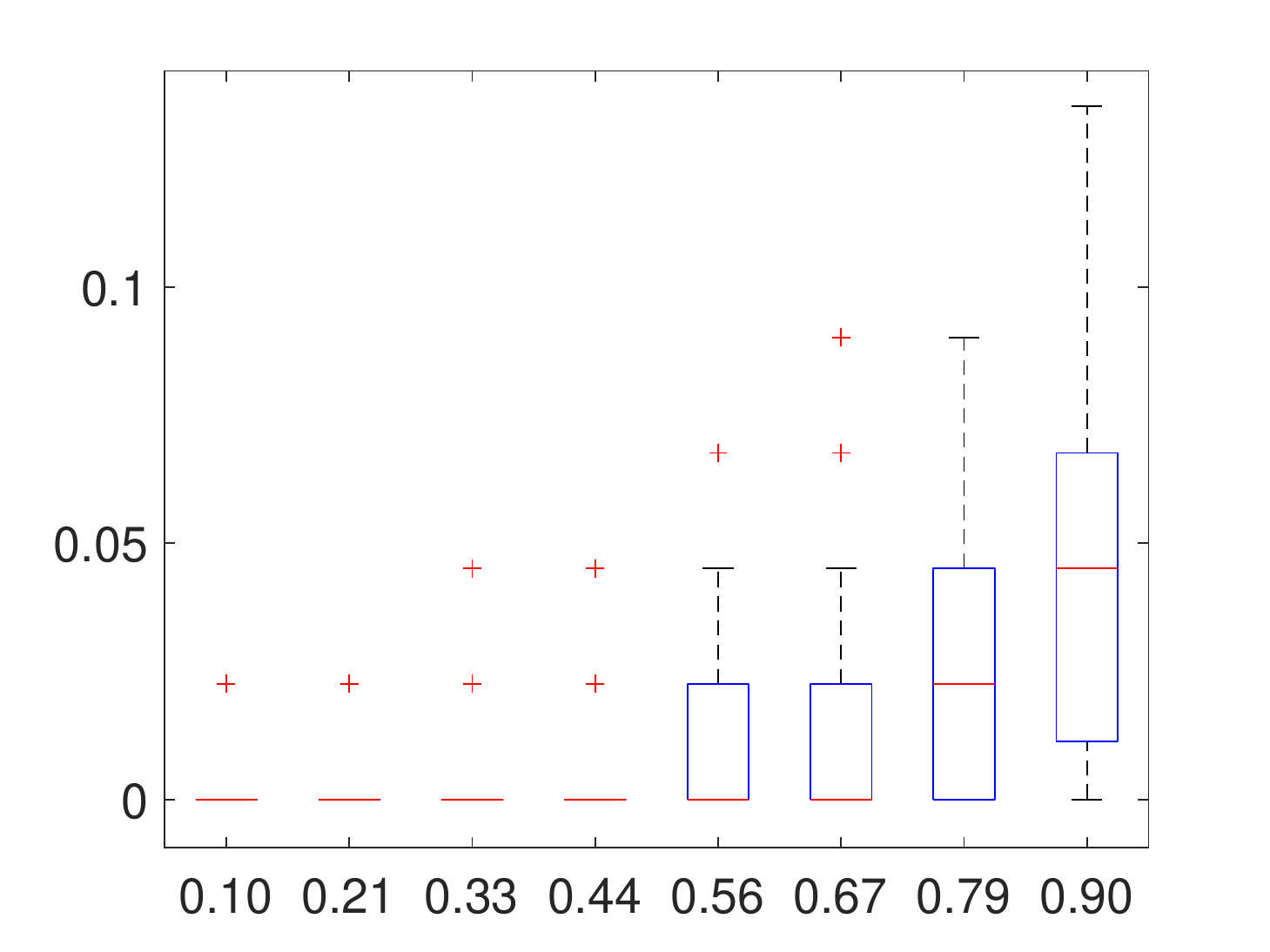}}
\caption{False detection rate ($\%$)}
 \end{figure}

\subsubsection{$n=1000$}

\begin{figure}[H]
\setcounter{subfigure}{0}
\centering
\subfloat[RS]{\includegraphics[width=0.27\linewidth]{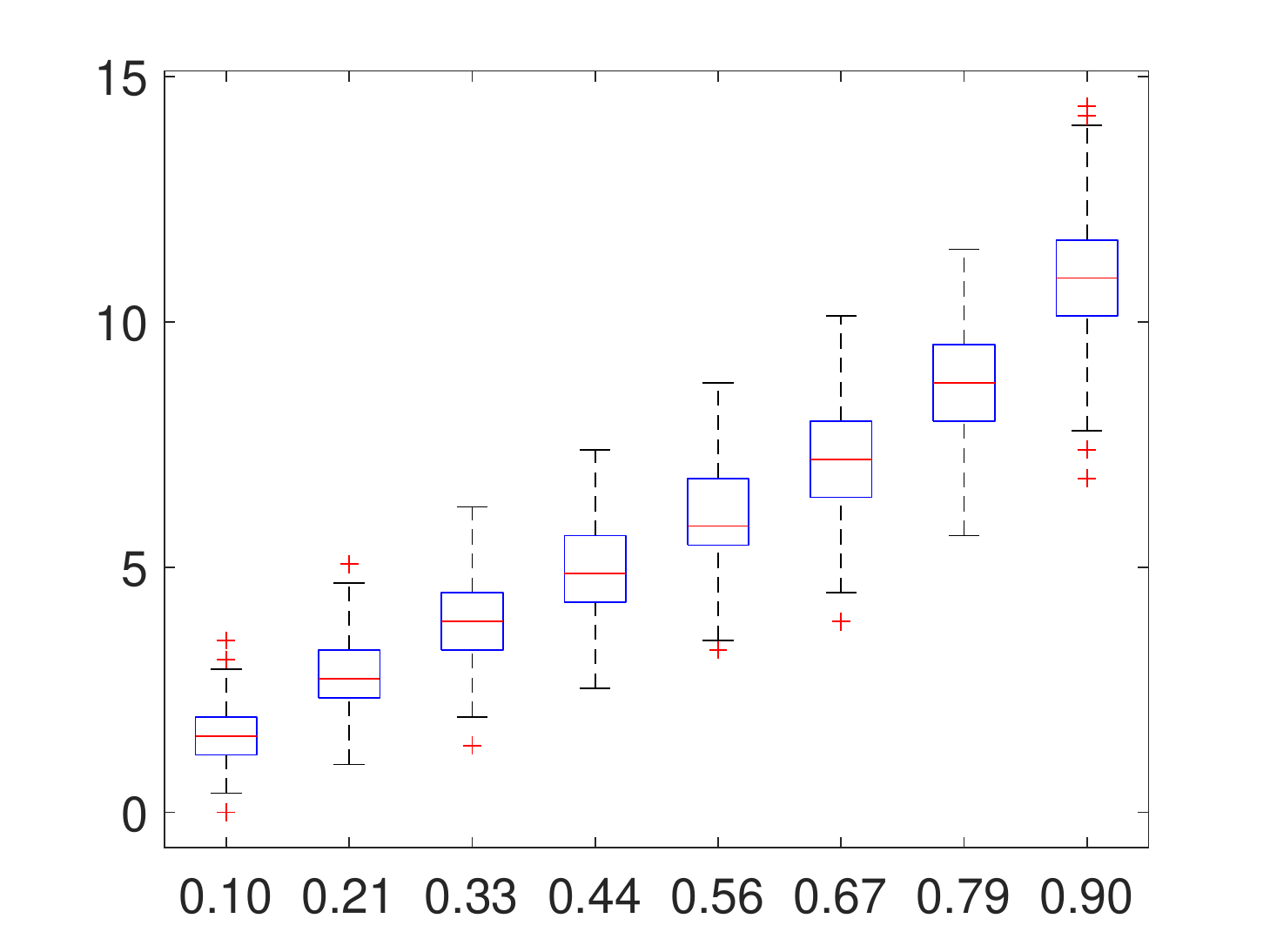}}
\subfloat[CV]{\includegraphics[width=0.27\linewidth]{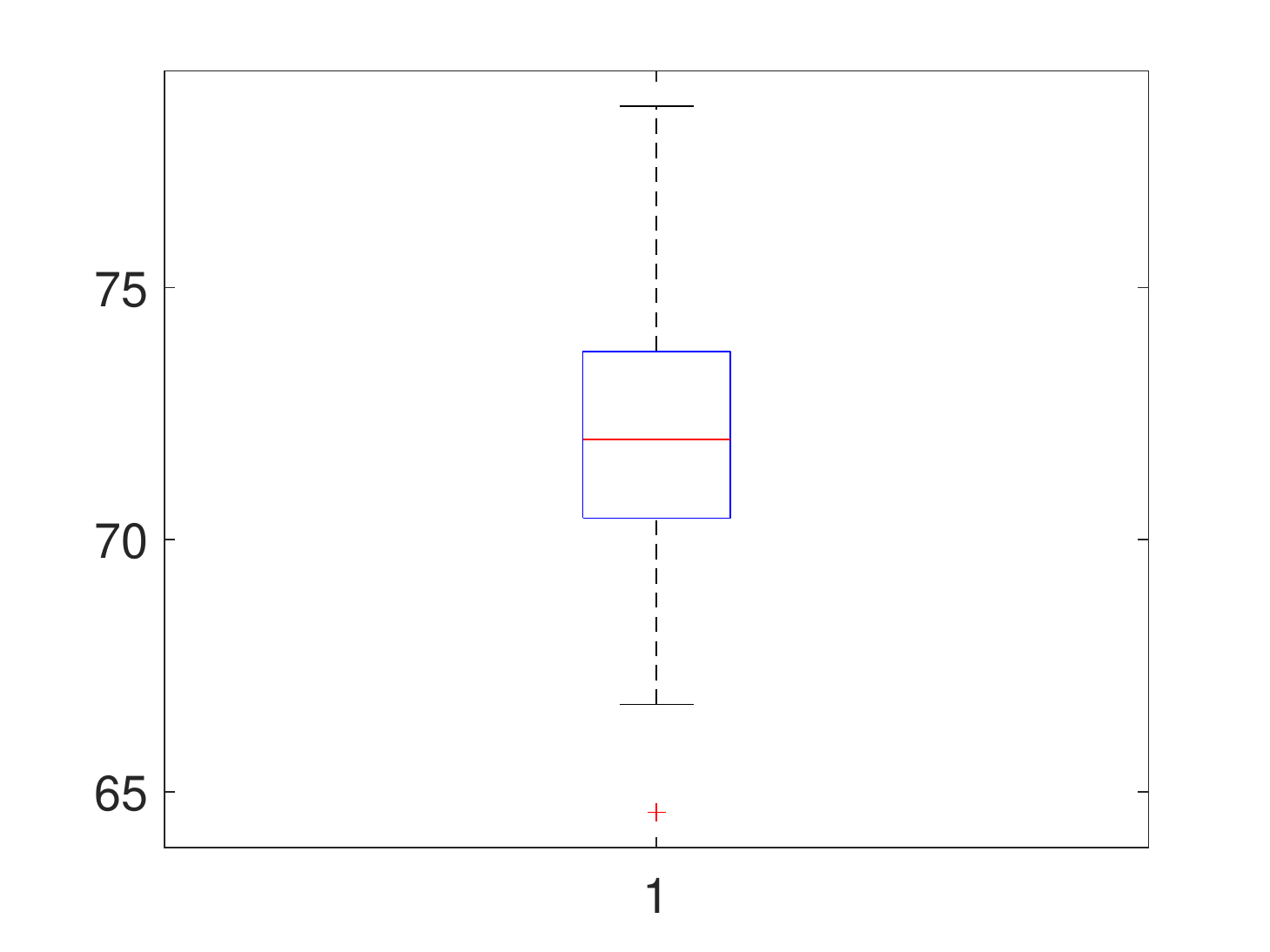}}
\subfloat[RWP]{\includegraphics[width=0.27\linewidth]{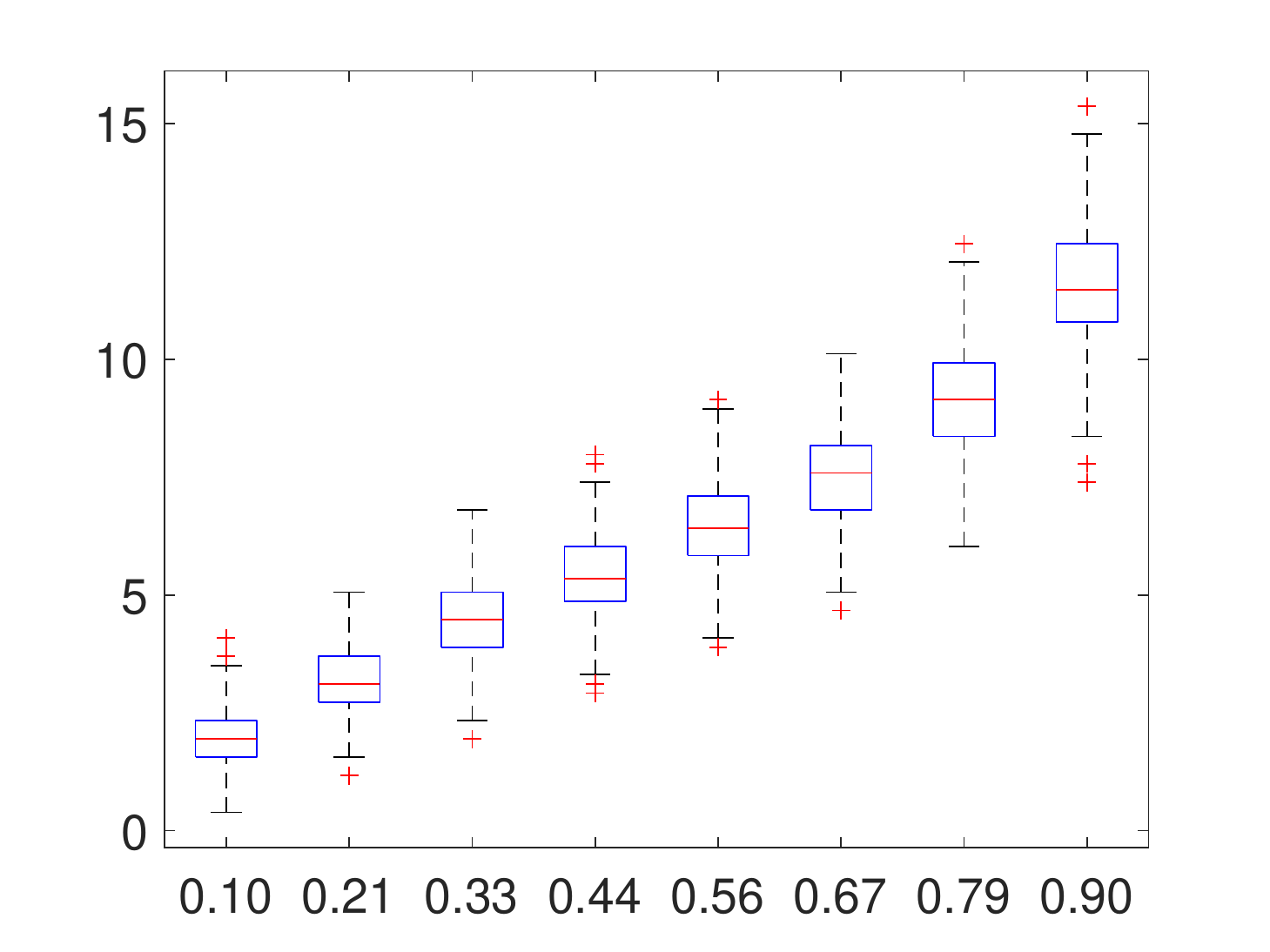}}
\caption{True positive rate ($\%$)}
 \end{figure}
%

\begin{figure}[H]
\setcounter{subfigure}{0}
\centering
\subfloat[RS]{\includegraphics[width=0.27\linewidth]{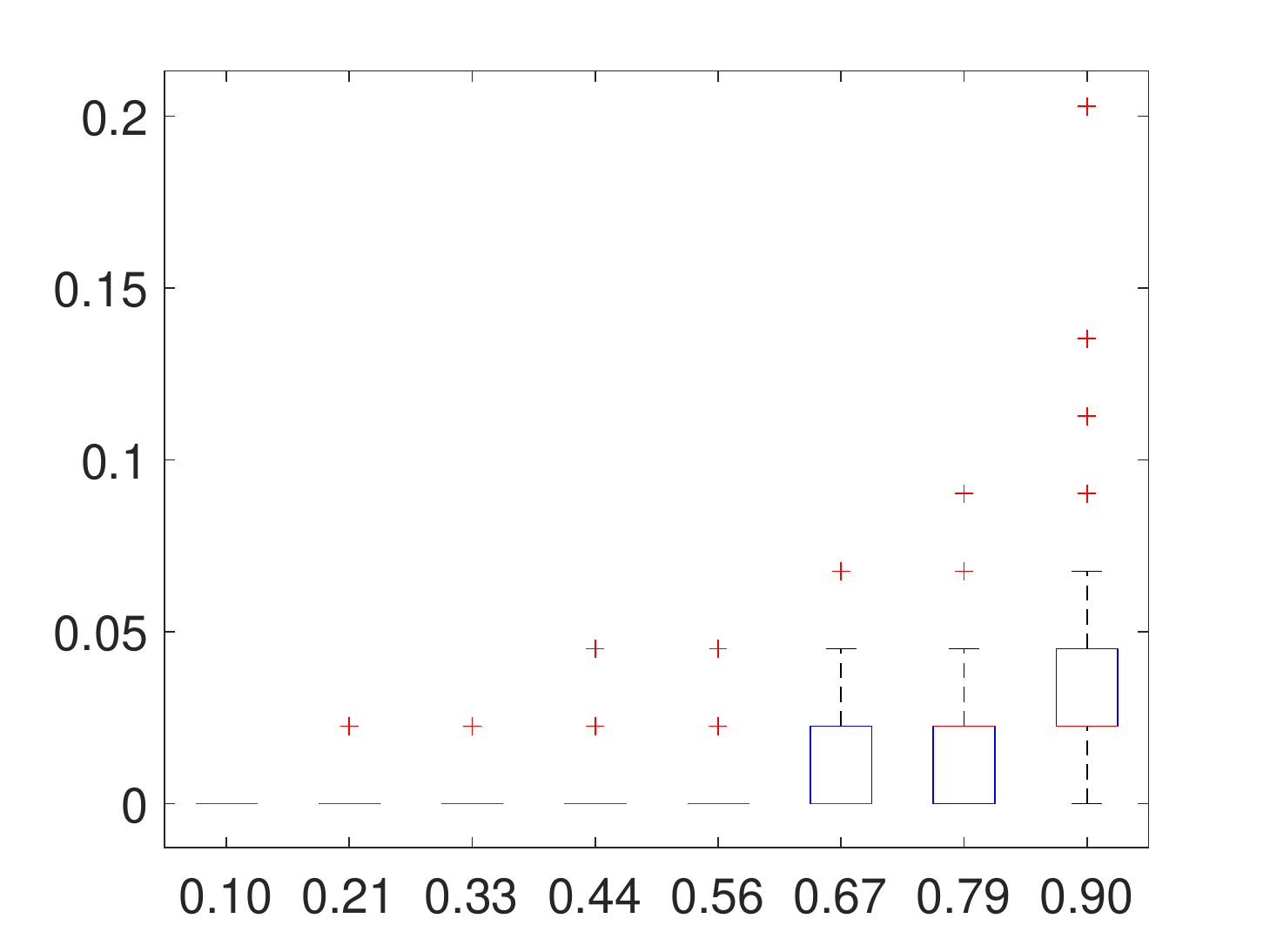}}
\subfloat[CV]{\includegraphics[width=0.27\linewidth]{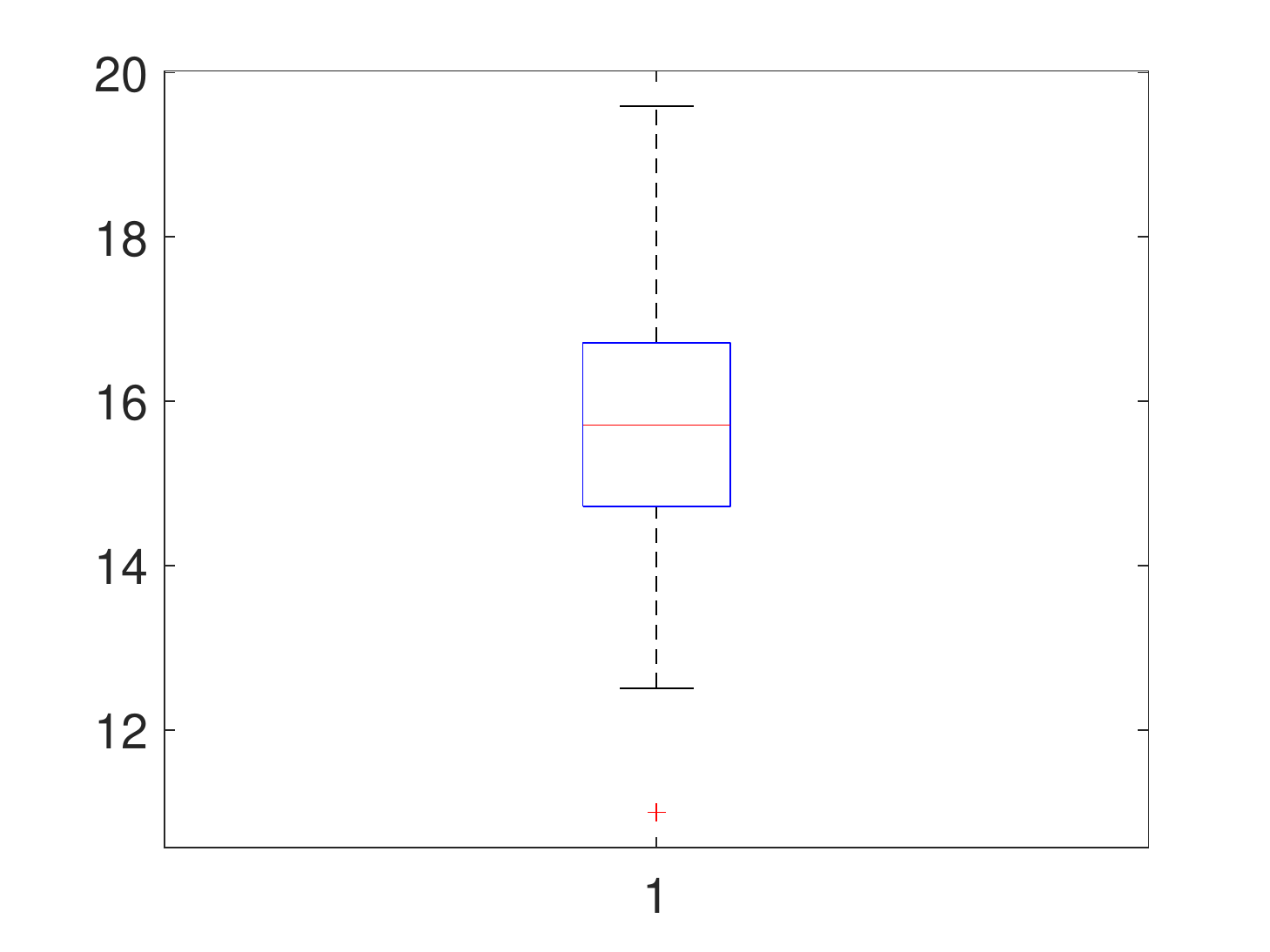}}
\subfloat[RWP]{\includegraphics[width=0.27\linewidth]{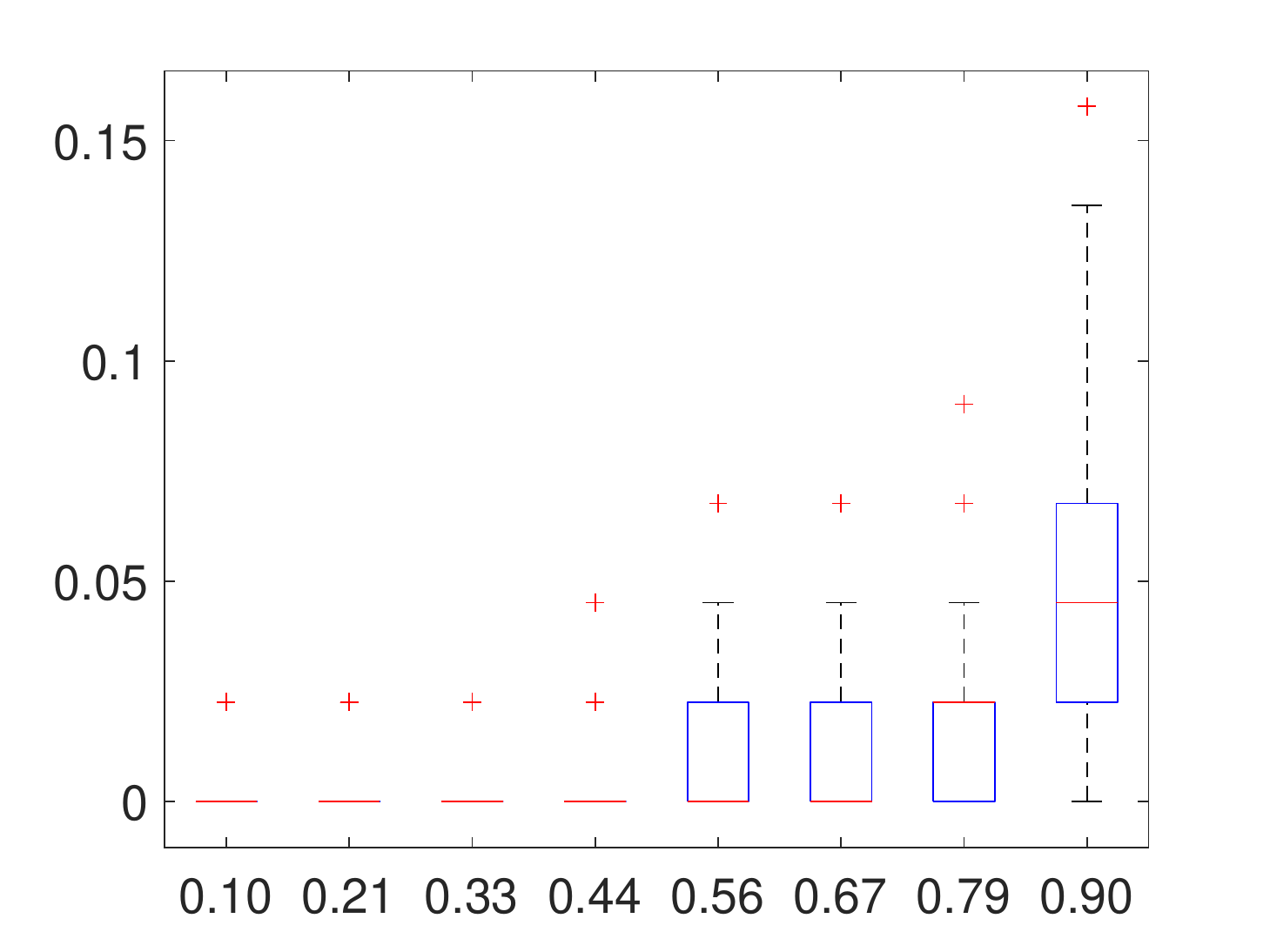}}
\caption{False detection rate ($\%$)}
 \end{figure}

\end{document}